\documentclass[11pt]{article}

\usepackage[margin=1in]{geometry}

\usepackage[numbers,sort&compress]{natbib}

\usepackage[utf8]{inputenc} 
\usepackage[T1]{fontenc}    
\usepackage{hyperref}       
\usepackage{url}            
\usepackage{booktabs}       
\usepackage{amsfonts}       
\usepackage{nicefrac}       
\usepackage{microtype}      
\usepackage{xcolor}         
\usepackage{amsmath}
\usepackage{amssymb}
\usepackage{mathtools}
\usepackage{amsthm}
\usepackage{sansmath}
\usepackage{algorithm}
\usepackage{algorithmic}
\usepackage[skip=0.333\baselineskip]{caption}
\usepackage{subcaption}
\usepackage{graphicx}

\theoremstyle{plain}
\newtheorem{theorem}{Theorem}

\newtheorem{lemma}[theorem]{Lemma}

\theoremstyle{definition}

\theoremstyle{remark}

\newtheorem*{theorem*}{Theorem}

\newcommand{\RR}{\mathbb{R}}

\newcommand{\eps}{\varepsilon}

\newcommand{\EE}{\mathbb{E}}
\newcommand{\PR}{\mathbb{P}}

\newcommand{\step}{\mathrm{sign}}
\newcommand{\AM}{\mathrm{AM}}
\newcommand{\GM}{\mathrm{GM}}
\newcommand{\HM}{\mathrm{HM}}

\newcommand{\OPT}{\mathsf{OPT}}
\newcommand{\SDP}{\mathsf{\widetilde{Bias}}}
\newcommand{\RND}{\mathsf{ROUND}}

\newcommand{\B}{\mathsf{Bias}}

\title{Conditioned free-energy density of proteins using unbalanced solutions to constraint satisfaction problems}

\author{Pratik Worah\thanks{Corresponding author email: pworah@cims.nyu.edu},~~
 Subhash Khot,~and~ Srinivasa Varadhan}
\begin{document}

\maketitle

\begin{abstract}
We show that computing the log-partition function (free-energy) of conditioned inhomogeneous Curie--Weiss spin Hamiltonians reduces to an unbalanced $2 \to 1$ norm computation, and design a polynomial-time SDP algorithm for this problem with a lower bound proof for the amount of unbalance achieved. Applied to the protein Ubiquitin, the framework starts from a known crystal structure, explores alternative backbone conformations across the free-energy landscape, and identifies flexible regions of the protein while preserving its native secondary structure.
\end{abstract}

\section{Introduction}
Constraint Satisfaction Problems (CSPs) arise in many settings ranging from the  abstract (boolean satisfiability problems like $3$-SAT~\cite{cook}) to the practical (protein structure prediction based on CSPs~\cite{hartnew}). One common theme among the solutions of the underlying CSPs is that combinatorics dictates that most solutions will have high entropy and thus low information. More concretely, for the boolean case, if the solution space is $\{0,1\}^n$ then most satisfiable CSPs will have solutions that have roughly equal numbers of $0$s and $1$s. Mathematically, there's nothing wrong with such "balanced" solutions. However, such solutions do not contain a lot of information in practice, and implicitly (if not explicitly) one often seeks unbalanced solutions (if they exist).

For example, proteins fluctuate among an ensemble of backbone conformations at physiological temperature. The experimentally observed structure represents the free-energy minimum: the state that optimally balances low energy with conformational entropy~\cite{pathria2011}. Accessing conformational states beyond the native fold---transition states, excited states, and partially unfolded intermediates---is essential for understanding protein function, allosteric regulation, and rational drug design~\cite{hartnew}, yet existing computational tools are largely confined to predicting the ground-state structure. Existing works like~\cite{alphafold2, rfold} are effective for ground-state prediction of structure from the amino acid sequence but do not directly provide a thermodynamic framework for alternative conformational states. Molecular dynamics with force fields such as OPLS4~\cite{opls4} can in principle sample excited states, but faces exponential mixing times for frustrated spin systems with mixed-sign couplings~\cite{barahona1982,jerrum1993}. NMR-specific tools such as TALOS+~\cite{talos} and CS-Rosetta~\cite{csrosetta} predict dihedral angles from chemical shifts but are limited to ground-state structures. Rosetta-based flexible backbone methods~\cite{rosetta} lack a continuous parameterization of the free-energy landscape. On the theoretical side, Jerrum and Sinclair~\cite{jerrum1993} give a fully polynomial-time approximation scheme for ferromagnetic Hamiltonians, and unit-rank Hamiltonians admit closed-form solutions~\cite{cw1}; neither assumption holds for the mixed-sign, high-rank couplings arising from NMR-derived protein Hamiltonians.

A common thread links these limitations: computing the free-energy of a frustrated spin system is computationally hard in general, and existing methods either sidestep the problem (by targeting only the ground state) or lack polynomial-time guarantees. Our approach is to reformulate the problem in a way that admits efficient convex optimization. Specifically, we show that the log-partition function (free-energy) of inhomogeneous Curie--Weiss Hamiltonians is asymptotically equivalent to a constrained $\ell_1$-maximization problem---the unbalanced $2 \to 1$ operator norm of the coupling matrix. This variational characterization reduces the physical problem of computing conditioned energy states to a semidefinite program. The unbalanced $\ell_1$-maximization problem also arises independently in sign-constrained regression~\cite{meinshausen2013,nnls}, clinical risk scoring~\cite{ustunrudin2016}, and portfolio optimization~\cite{konnoyamazaki1991}, so our algorithmic results may be of interest in those domains as well.

Building on this characterization, we design an SDP-based algorithm that, for any conditioning parameter $\varepsilon \in (0,1]$, computes an $\varepsilon$-approximate unbalanced solution with provable guarantees in polynomial time. We demonstrate the framework on the 76-residue protein Ubiquitin (PDB: 1UBQ), constructing the spin Hamiltonian from J-coupling constants derived via the Karplus equation from the crystal structure. By sweeping $\varepsilon$, we systematically explore the conditioned conformational landscape, identifying structurally labile backbone regions while preserving the native secondary structure.

In summary, this paper makes two contributions: (1)~a variational characterization of the Curie--Weiss free-energy as an unbalanced $2 \to 1$ norm computation, together with an SDP algorithm with provable approximation guarantees (Theorems~\ref{thm:var-char},~\ref{main:thm}, and~\ref{thm:polytime}); and (2)~a computational pipeline that applies this framework to explore conditioned protein conformational landscapes starting from known crystal structures. Section~\ref{sec:theory} presents the theoretical framework, Section~\ref{sec:application} applies it to Ubiquitin, and Section~\ref{sec:discussion} discusses the results. Proofs and additional experiments appear in the supplement.

\section{Theoretical Framework}\label{sec:theory}
In this section we outline the main theoretical contributions of our paper. The formal statements and the proofs of the results described in this section are present in Supplement Sections~\ref{sbs:thm-lm-stmt} and~\ref{sbs:det-proofs}.
\subsection{Curie--Weiss Hamiltonians and free-energy}\label{sec:cw}

In the inhomogeneous Curie--Weiss model~\cite{cw1,cw2}, we have a positive definite matrix $A \in \RR^{m \times m}$ and $\pm 1$-valued spin variables $x_1, \ldots, x_n$. The Hamiltonian is:
\begin{equation}\label{eqn:ham}
    H(x) = \sum_{i\in[m]}\left(\sum_{j\in[n]} h_{ij} x_j \right)^2,
\end{equation}
where each column $h_j$ is sampled uniformly from rows of $A$. The Gibbs measure is $p(x) := \frac{1}{Z} \exp(-\frac{1}{n} H(x))$, with partition function $Z := \sum_{x \in \{\pm 1\}^n} \exp(-\frac{1}{n} H(x))$. The log-cumulant is $\Psi(m) := \lim_{n \to \infty} \frac{1}{n} \log Z$. A summary of notation is provided in Supplementary Table~\ref{tab:notation}.

The log-partition function is proportional to the negative Helmholtz free-energy: $\log Z = -\beta F$, where $F = U - TS$, with $U$ the internal energy, $T = 1/\beta$ the temperature, and $S$ the entropy~\cite{pathria2011}. The log-partition function captures the free-energy, not just the energy; the two coincide only at zero temperature ($\beta \to \infty$). For proteins at physiological temperature, the distinction is significant: Ubiquitin alone has $\sim$152 continuous dihedral degrees of freedom, and states slightly higher in energy that access many more conformations can be thermodynamically favoured.

\subsection{Variational Characterization: Free-energy as $\ell_1$-Maximization}\label{sec:varchar}

Our first theoretical result connects the log-partition function to a constrained $\ell_1$-maximization problem:

\begin{theorem}[Variational characterization]\label{thm:var-char}
For large $m, n$ with $\frac{m}{n}\to 0$, $\Psi(m)$ is asymptotically equal to:
\begin{equation}\label{Psi:eqn}
       \Psi(m)\simeq\max_{\|z\|_2^2\le y\atop y\in\RR,z\in\mathbb{R}^m}\left(\underbrace{\beta \cdot \|Az\|_1}_{\text{energy contribution}}-\underbrace{y}_{\text{entropy cost}}\right).
\end{equation}
Moreover, for a maximizer $z^*$, the magnetization equals:
\begin{equation}\label{eqn:mag-eqv}
    \Phi = \frac{1}{m}\sum_{i\in[m]}\step(\langle A_i, z^*\rangle),
\end{equation}
where $A_i$ denotes the $i^{th}$ row of $A$.
\end{theorem}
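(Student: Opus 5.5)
We will prove the statement by linearizing the quadratic Hamiltonian with a Hubbard--Stratonovich (Gaussian) transform, summing out the spins exactly, and then evaluating the remaining $m$-dimensional integral by Laplace's method, which is legitimate because $m/n\to 0$.

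\emph{Linearization.} For each row $i$, write $s_i=\sum_j h_{ij}x_j$. With the sign convention under which the energy term enters with a plus sign (equivalently, with the inverse temperature $\beta$ absorbed into the couplings, as the $\beta\|Az\|_1$ term in (\ref{Psi:eqn}) indicates), we use
\[
\exp\!\Big(\tfrac{\beta}{n}s_i^2\Big)=\sqrt{\tfrac{n}{4\pi\beta}}\int_{\RR}\exp\!\Big(-\tfrac{n}{4\beta}w_i^2+w_is_i\Big)\,dw_i .
\]
Substituting this for every $i$ and interchanging the sum and the integral, the spins decouple. This gives
\[
Z=C_{m,n}\int_{\RR^m}e^{-\frac{n}{4\beta}\|w\|_2^2}\prod_{j\in[n]}2\cosh\big(\langle h_j,w\rangle\big)\,dw .
\]
The columns $h_j$ are i.i.d.\ uniform over the rows $A_1,\dots,A_m$ of $A$. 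By the law of large numbers, $\frac1n\sum_j\log\cosh\langle h_j,w\rangle \to \frac1m\sum_i\log\cosh\langle A_i,w\rangle$, uniformly on compacts.

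\emph{Rescaling and Laplace's method.} Substitute $w=n\,z$ (a rescaling by $\sqrt n$ or by $n$, chosen according to the normalization of $A$). In the regime of interest $\log\cosh(n t)=n|t|+O(1)$, so the exponent becomes
\[
n\Big(\beta\,\tfrac{1}{m}\|Az\|_1\cdot(\text{const})-\tfrac{1}{4\beta}\|z\|_2^2\Big)+O(m\log n).
\]
After a final rescaling of $z$ to absorb constants, the exponent is $n\big(\beta\|Az\|_1-\|z\|_2^2\big)$. Because $m=o(n)$, the prefactor $C_{m,n}$ and the Gaussian fluctuation volume contribute only $O(\frac{m}{n}\log n)=o(1)$ to $\frac1n\log Z$. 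Laplace's method then yields
\[
\Psi(m)\simeq\max_z\big(\beta\|Az\|_1-\|z\|_2^2\big).
\]
This is exactly (\ref{Psi:eqn}) after introducing the slack variable $y\ge\|z\|_2^2$, which is tight at the optimum.

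\emph{Magnetization.} To obtain (\ref{eqn:mag-eqv}), add an external field $\lambda\sum_j x_j$ to the Hamiltonian and differentiate $\Psi$ with respect to $\lambda$ at $\lambda=0$. By the envelope theorem, the derivative equals the derivative of the integrand at the maximizer $z^*$. The per-spin magnetization is $\tanh\langle h_j,n z^*\rangle$, and for $\langle A_i,z^*\rangle\neq 0$ this tends to $\step(\langle A_i,z^*\rangle)$. Averaging over the uniform choice of rows gives $\Phi=\frac1m\sum_i\step(\langle A_i,z^*\rangle)$.

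\textbf{Main obstacle.} The main difficulty is making Laplace's method uniform in the growing dimension $m$. Two points need care. First, we must show that the contribution of the integral away from the maximizer is exponentially negligible. For this we plan to use the concentration of the Gaussian weight together with the Lipschitz bound $|\log\cosh a-\log\cosh b|\le|a-b|$. Second, we must control the replacement of $\log\cosh$ by the absolute value, which incurs an additive error of at most $\log 2$ per spin. For this second point we will need the regime in which the maximizer has $\|Az^*\|$ growing, so that this error is $o(n)$ relative to the leading term. If the scaling does not force this, we will restate the result with $\log\cosh$ in place of $|\cdot|$ and recover the stated form as $\beta\to\infty$.

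The sign-valued magnetization formula additionally requires that $\langle A_i,z^*\rangle\neq 0$ for the relevant rows. Since $A$ is positive definite, we expect this to hold generically.
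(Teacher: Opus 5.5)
You follow the paper's route step for step. Both arguments use Gaussian linearization, sum out the spins to a product of $\cosh$'s, group the columns $h_j$ by the row of $A$ they copy, convert $\log\cosh$ into $|\cdot|$ by a rescaling, apply Laplace's principle, and take the magnetization as a derivative in an external field.

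\textbf{The rescaling step fails.} It fails in the paper's version as well, which substitutes $Z_j=\frac{n}{n_k}\zeta_j$ and then applies L'H\^{o}pital. A change of variables cannot change the value of the integral or move the maximizer. Before any rescaling, your Laplace step gives
\[
\tfrac1n\log Z=\log 2+\max_w g(w)+o(1),\qquad g(w)=\tfrac1m\sum_i\log\cosh\langle A_i,w\rangle-\tfrac{1}{4\beta}\|w\|_2^2 .
\]
Under $w=nz$ the Gaussian term becomes $\frac{n^3}{4\beta}\|z\|_2^2$, not $\frac{n}{4\beta}\|z\|_2^2$, so your displayed exponent is wrong. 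Moreover $\langle A_i,nz^*\rangle=\langle A_i,w^*\rangle$, so this quantity is large only if it was already large. Whether $\log\cosh$ may be replaced by $|\cdot|$ is therefore a condition on $\beta$ and the scale of $A$; letting $n\to\infty$ does not supply it. For example, since $\log\cosh a\le a^2/2$, we have $g\le 0=g(0)$ whenever $2\beta\lambda_{\max}(A^TA)\le m$. In that regime $\Psi\to\log 2$ regardless of the value of the $\ell_1$ expression. For the same reason, the replacement error can be as large as $\log 2$ per spin. That is $O(n)$ in $\log Z$, not $O(m\log n)$ as in your display.

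\textbf{Your fallback is the right fix, and it can be made sharp.} Since $0\le\log(2\cosh a)-|a|\le\log 2$, we have $\max_w E(w)\le \log 2+\max_w g(w)\le \max_w E(w)+\log 2$, where $E(w)=\frac1m\|Aw\|_1-\frac{1}{4\beta}\|w\|_2^2$. Constants cannot be absorbed into $z$, however. By homogeneity,
\[
\max_z\bigl(a\|Az\|_1-b\|z\|_2^2\bigr)=\tfrac{a^2}{b}\max_u\bigl(\|Au\|_1-\|u\|_2^2\bigr).
\]
So what the argument actually yields is $\Psi=\frac{4}{\beta m^2}\max_z(\beta\|Az\|_1-\|z\|_2^2)+\theta+o(1)$ with $\theta\in[0,\log 2]$. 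This is ``asymptotic equality'' only up to a positive factor, and only when the right-hand side diverges. Neither your proof nor the paper's establishes more than this without an explicit low-temperature or strong-coupling assumption.

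The magnetization behaves the same way. It is honestly $\frac1m\sum_i\tanh\langle A_i,w^*\rangle$, after the field $\lambda\to0^+$ selects one of the two maximizers $\pm w^*$; your envelope argument needs this selection because the maximizer is not unique. This tends to $\frac1m\sum_i\step(\langle A_i,z^*\rangle)$ only in the same large-argument regime. There the constant mismatch is harmless, because the maximizers of $E$ and of the stated objective are positive multiples of each other, and the signs agree.
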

\noindent The proof is given in Supplementary Section~\ref{sec:proof-varchar}.

The term $\beta\|Az\|_1$ is the energy contribution from the Hamiltonian, while the constraint $\|z\|_2^2 \le y$ bounds the entropy cost: larger $y$ permits larger fluctuations in $z$, corresponding to higher entropy. The magnetization of the optimal spin configuration is determined by the row signs of $A$ applied to the optimizer (Equation~\ref{eqn:mag-eqv}). This equivalence reduces the problem of computing conditioned free-energy states to a convex optimization problem.

Setting $y = 0$ would eliminate the entropy term, reducing equation~\eqref{Psi:eqn} to pure energy minimization. At finite temperature this is physically incorrect.

\subsection{The Unbalanced $2\to 1$ Norm Problem}\label{sec:unbal}

The variational problem in Theorem~\ref{thm:var-char} is a constrained instance of the $2 \to 1$ operator norm. To formalize this, given matrices $A, B \in \RR^{n \times n}$, define the unbalance with respect to $B$ as:
\begin{equation}
    \delta=\frac{1}{n^2}\cdot\sum_{i,j\in[n]}\left(\frac{\step(\langle B_i,x\rangle)+\step(\langle B_j,x\rangle)}{2}\right)^2.\label{unbal:eqn}
\end{equation}
The unbalanced $2\to 1$ norm problem seeks to maximize $\|Ax\|_1$ subject to $\|x\|_2 \le 1$ and a lower bound on $\delta$. Note that $\delta\in[\frac{1}{2},1]$. Since the $2 \to 1$ norm is NP-hard to compute exactly~\cite{bhat}, we design an SDP-based algorithm that sweeps the parameter space to obtain an approximate solution. 

\subsection{SDP Algorithm and Approximation Guarantees}\label{sec:algorithm}
Define $\mathrm{SDP}(\varepsilon,\xi)$ with input parameters $\varepsilon\in(0,1)$ and $\xi\in\RR$, as follows:
\begin{eqnarray}
    \max_{U, V\in\RR^{n\times n}} & \frac{1}{n^2}\cdot\sum_{i,j\in[n]}\log\left(\|B_i\|_V^2+\|B_j\|_V^2+2\cdot\langle B_i, B_j\rangle_V\right) - \log \xi\label{sdp-rlx1}\\
    \mathrm{s.t.}\  \quad & \frac{4}{n}\cdot\sum_{i\in[n]}\|B_i\|_V^2\le \xi,\label{sdp-rlx2}\\
    \quad & \sum_{i,j\in[n]} A_{ij}\langle v_i,u_j\rangle\ge\varepsilon\cdot\OPT,\label{sdp-rlx3}\\
    \quad & \frac{1}{n}\cdot\sum_{i\in[n]}\|v_i\|^2\le 1,\label{sdp-rlx4}\\
    \quad & \forall i\in[n]:\|u_i\|=1,\label{sdp-rlx5}\\
    &\forall i, j\in[n]:\ V_{ij}=\langle v_i, v_j\rangle, \ U_{ij} = \langle u_i, u_j\rangle.\label{sdp-rlx6}
\end{eqnarray}
The SDP relaxation uses two semidefinite matrices $U, V \in \RR^{n \times n}$, with an objective based on the log-mutual-coherence of the rows of $B$ under the metric induced by $V$, subject to an $\ell_1$-objective lower bound $\varepsilon \cdot \OPT$. $\mathrm{SDP}(\varepsilon,\xi)$ is solved by Algorithm~\ref{algmain} for different parameter values to find unbalanced solutions.

The conditioning parameter $\varepsilon$ controls how close the solution must remain to the unconstrained optimum $\OPT$: large $\varepsilon$ constrains the solution near the ground-state energy, while small $\varepsilon$ permits exploration of highly unbalanced configurations.

\begin{algorithm}[htb!]
\caption{Randomized rounding for computing unbalanced solutions of $2\to 1$ norms}\label{algmain}
\begin{algorithmic}[1]
\STATE {\footnotesize {\bfseries Input:} Matrices $A, B$ that define the unbalanced $2\to 1$ norm problem, and parameter $\varepsilon$.}
\STATE {\footnotesize {\bfseries Output:} Unbalance factor $\SDP$, solution matrix $\tilde{V}$, and rounded solution $\{\tilde{x}_i\}_{i\in[n]}, \{\tilde{y}_i\}_{i\in[n]}$.}
\STATE Solve unconstrained SDP relaxation to obtain $\OPT$.
\STATE $\SDP\leftarrow 0$
\STATE $h\approx\frac{1}{n^{c^*}}$\quad\# See Supplementary Theorem~\ref{thm:polytime}
\FOR {$\xi\in \{h,h+\frac{h}{n},h+\frac{2h}{n},...,n^3\cdot (\max_{i,j}|B_{i,j}|)^2\}\}$}
\IF{$\mathrm{SDP}(\varepsilon,\xi)$ not feasible}
\STATE continue;
\ENDIF
\STATE Set $\SDP(\varepsilon,\xi), V(\varepsilon,\xi), \{x_i\},\{y_i\}\leftarrow\RND(\mathrm{SDP}(\varepsilon, \xi))$
\IF {$\SDP(\varepsilon,\xi) > \SDP$}
\STATE $\SDP\leftarrow\SDP(\varepsilon,\xi),\widetilde{V}\leftarrow V(\varepsilon,\xi), \{\tilde{x}_i\}\leftarrow\{x_i\},\{\tilde{y}_i\}\leftarrow\{y_i\}$
\ENDIF
\ENDFOR
\RETURN $\SDP, \widetilde{V}, \{\tilde{x}_i\}_{i\in[n]}, \{\tilde{y}_i\}_{i\in[n]}$
\end{algorithmic}
\end{algorithm}

\begin{algorithm}[htb]
\caption{$\RND:\RR^n\mapsto\RR$ (rounding procedure)}\label{alground}
\begin{algorithmic}[1]
\STATE {\footnotesize {\bfseries Input:} $\mathrm{SDP}(\varepsilon,\xi)$ solution matrices $U(\varepsilon,\xi),V(\varepsilon,\xi)$.}
\STATE {\footnotesize {\bfseries Output:} Rounded solutions $\{x_i,y_i\}_{i\in[n]}$.}
\STATE Sample standard Gaussian vector $g=(g_1,...,g_n)$.
\STATE Let $\{v_i\},\{u_i\}$ be the Cholesky decompositions of $V,U$ respectively.
\STATE $\forall i\in[n]:$ set $x_i\leftarrow \langle v_i, g \rangle$
\STATE $\forall i\in[n]:$ set $y_i\leftarrow \step(\langle u_i, g \rangle)$
\RETURN $\SDP(\varepsilon,\xi), V(\varepsilon,\xi),\{x_i\}_{i\in[n]},\{y_i\}_{i\in[n]}$
\end{algorithmic}
\end{algorithm}
\paragraph{Lower bound on unbalance.} Our second theoretical result provides lower bounds on the unbalance achieved by Algorithm~\ref{algmain}, and also show that it runs in polynomial time.
\begin{theorem}[Informal main guarantee]\label{main:thm}
For any $\varepsilon \in (0,1]$, Algorithm~\ref{algmain} computes a $\SDP$-unbalanced solution with expected rounded objective at least $\Omega(\varepsilon \cdot \OPT)$ and unbalance factor bounded by the AM-GM-HM ratios of the matrix $V^*$ norms (see Theorem~\ref{act-main:thm} for the precise bound).
\end{theorem}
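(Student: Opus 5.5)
The plan has two parts: the objective guarantee from Gaussian rounding, and the unbalance guarantee from sign correlations of Gaussian projections.

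\textbf{Objective bound.} Fix the value of $\xi$ that maximizes $\SDP(\varepsilon,\xi)$ in Algorithm~\ref{algmain}, with optimal vectors $\{v_i\},\{u_i\}$. Set $x_i=\langle v_i,g\rangle$ and $y_j=\step(\langle u_j,g\rangle)$. Since $\|u_j\|=1$, the Gaussian identity gives
\[
\EE[\langle v_i,g\rangle\,\step(\langle u_j,g\rangle)]=\sqrt{2/\pi}\,\langle v_i,u_j\rangle .
\]
By linearity, $\EE\sum_{i,j}A_{ij}x_iy_j=\sqrt{2/\pi}\sum_{i,j}A_{ij}\langle v_i,u_j\rangle\ge\sqrt{2/\pi}\,\varepsilon\cdot\OPT$, using constraint~\eqref{sdp-rlx3}. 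Constraint~\eqref{sdp-rlx4} gives $\EE\frac1n\|x\|_2^2\le 1$, so after normalization (or by Markov's inequality, conditioning on $\|x\|_2^2\le O(n)$) the rounded $x$ is feasible up to a constant. This yields an expected objective of $\Omega(\varepsilon\cdot\OPT)$.

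\textbf{Unbalance bound.} Write $w_i=\sum_k B_{ik}v_k$, so that $\langle B_i,x\rangle=\langle w_i,g\rangle$ and $\|B_i\|_V=\|w_i\|$. The $(i,j)$ term of~\eqref{unbal:eqn} equals $\frac12\bigl(1+\step(\langle w_i,g\rangle)\step(\langle w_j,g\rangle)\bigr)$. Its expectation is $1-\theta_{ij}/\pi$ by Sheppard's formula, where $\theta_{ij}$ is the angle between $w_i$ and $w_j$. I will lower-bound $1-\theta/\pi$ by a constant multiple of $(1+\cos\theta)/2=\|w_i+w_j\|^2/\bigl(2(\|w_i\|+\|w_j\|)^2\bigr)$; this is the usual Goemans--Williamson-type comparison of $1-\theta/\pi$ with $\cos^2(\theta/2)$. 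Averaging over $i,j$ and applying Jensen ($\log$ is concave, so the average of logs is at most the log of the average) turns $\EE[\delta]$ into an expression in the SDP objective~\eqref{sdp-rlx1}. Specifically, $\frac1{n^2}\sum\log\|w_i+w_j\|^2-\log\xi$ controls $\log$ of the geometric mean of $\|w_i+w_j\|^2/\xi$. Constraint~\eqref{sdp-rlx2} relates $\xi$ to the arithmetic mean of $\|w_i\|^2$.

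The remaining task is converting the denominators $(\|w_i\|+\|w_j\|)^2$ into $\xi$. This is where the AM, GM and HM of the norms $\|w_i\|=\|B_i\|_{V^*}$ enter. By Cauchy--Schwarz, $(\|w_i\|+\|w_j\|)^2\le 2(\|w_i\|^2+\|w_j\|^2)$. Averaging then gives at most $4\,\AM(\|w\|^2)\le\xi$. Taking the average of ratios rather than the ratio of averages costs a factor of the form $\GM/\AM$ or $\HM/\AM$. The end result is
\[
\EE[\delta]\gtrsim \exp(\SDP)\cdot\frac{\HM(\|B_i\|_{V^*}^2)}{\AM(\|B_i\|_{V^*}^2)},
\]
matching Theorem~\ref{act-main:thm}.

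\textbf{Main obstacle.} The hard step is the passage from $\EE$ of a per-pair ratio to the $\log$-average objective. Jensen runs in the wrong direction unless I apply it to $\exp$ of the average log, i.e.\ use AM $\ge$ GM on the ratios $\|w_i+w_j\|^2/(\|w_i\|^2+\|w_j\|^2)$. I would try exactly that, and then bound $\GM$ of the denominators by $\AM$. A related difficulty is that the constant in the $\theta$-comparison degenerates near $\theta=\pi$. There $\cos^2(\theta/2)$ vanishes faster than $1-\theta/\pi$, so the inequality $1-\theta/\pi\ge c\cos^2(\theta/2)$ actually holds, with $c\approx 0.87$. I would verify this numerically and analytically first.

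\textbf{Polynomial running time.} The sweep over $\xi$ has $\mathrm{poly}(n)$ steps, and each SDP is concave-in-log with convex constraints, so it is solvable by the ellipsoid method to accuracy $h$. The grid discretization loses only a $(1+1/n)$ factor. This is deferred to Theorem~\ref{thm:polytime}.
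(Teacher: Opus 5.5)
Your objective half is the paper's own argument (Theorem~\ref{thm:round}: the identity $\EE[\langle v_i,g\rangle\step(\langle u_j,g\rangle)]=\sqrt{2/\pi}\,\langle v_i,u_j\rangle$ combined with constraint~\eqref{sdp-rlx3}), so that part is fine. The gap is in the unbalance half, where you prove a different kind of statement from the one claimed. The precise bound~\eqref{eqn:mainthm} compares the algorithm's value $\SDP$ with $\B$, the optimum of the non-convex problem \eqref{unbal2:eqn}--\eqref{defmat:eqn}, which is attained at $(U^*,V^*)$. Your argument never mentions $\B$; it only relates the rounded unbalance to the algorithm's own SDP value. Moreover, your $w_i=\sum_k B_{ik}v_k$ are built from the algorithm's solution $\widetilde V$, so $\|w_i\|=\|B_i\|_{\widetilde V}$, not $\|B_i\|_{V^*}$. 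No relabelling turns this into a statement about $V^*$. Your final display also drops the $\GM/\AM$ factor of $\{\|B_i+B_j\|_{V^*}^4\}$, and it is essentially vacuous. Indeed, $\delta\ge\frac12$ deterministically, while $\exp$ of the SDP objective is at most $\AM(\{\|B_i+B_j\|_V^2\})/(\frac4n\sum_i\|B_i\|_V^2)\le 1$. So your inequality holds with constant $\frac12$ for free.

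The missing idea is to use $(U^*,V^*)$ as a competitor inside $\mathrm{SDP}(\varepsilon,\xi)$. It satisfies \eqref{sdp-rlx3}--\eqref{sdp-rlx6}, so at $\xi\approx\frac4n\sum_i\|B_i\|_{V^*}^2$ (reached by the sweep up to a $1+\frac1n$ factor) the algorithm's value dominates $\GM(\{\|B_i+B_j\|_{V^*}^2\})/(\frac4n\sum_i\|B_i\|_{V^*}^2)$. You then need a deterministic inequality at $X=V^*$ running \emph{from} $\B$ \emph{to} this quantity, which is Lemma~\ref{lm:main}, part 2. The steps are:
\begin{itemize}
\item The inequality $\|u_i+u_j\|_X^2\ge2(\|u_i\|_X\|u_j\|_X+\langle u_i,u_j\rangle_X)$ gives $\B\le\frac1{4n^2}\sum_{i,j}\|u_i+u_j\|_X^2/(\|u_i\|_X\|u_j\|_X)$.
\item Cauchy--Schwarz bounds the average by $\AM(\{\|u_i+u_j\|_X^4\})^{1/2}\,\HM(\{\|u_i\|_X^2\|u_j\|_X^2\})^{-1/2}$.
\item Comparing with $\GM(\{\|u_i+u_j\|_X^2\})=\GM(\{\|u_i+u_j\|_X^4\})^{1/2}$ and using $\frac1n\sum_i\|u_i\|_X^2=\AM(\{\|u_i\|_X^2\|u_j\|_X^2\})^{1/2}$ produces exactly $\rho(V^*)$.
\end{itemize}
The pair-by-pair direction you attempted cannot work. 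For antiparallel $w_i,w_j$ with $\|w_i\|\ne\|w_j\|$, one has $1+\cos\theta_{ij}=0$ and $1-\theta_{ij}/\pi=0$, while $\|w_i+w_j\|^2=(\|w_i\|-\|w_j\|)^2>0$. Your identity $(1+\cos\theta)/2=\|w_i+w_j\|^2/(2(\|w_i\|+\|w_j\|)^2)$ is false: for $w_i=w_j$ it gives $\frac12$ instead of $1$. The correct relation is
\[
\frac{\|w_i+w_j\|^2}{(\|w_i\|+\|w_j\|)^2}=\frac{1+\cos\theta}{2}+\frac{(1-\cos\theta)(\|w_i\|-\|w_j\|)^2}{2(\|w_i\|+\|w_j\|)^2}.
\]
It shows that the norm-sum quantity only upper-bounds the agreement term. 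This is why the paper's chain goes from $\B$ up to the SDP value at $V^*$, and then to $\SDP$ by optimality, rather than from the SDP value down to the rounded unbalance.
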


\begin{theorem}[Polynomial time]\label{thm:polytime}
Under mild assumptions on the coupling matrix (bounded entries, positive definite optimal $V^*$), Algorithm~\ref{algmain} runs in polynomial time and produces a solution within a $(1 + \frac{1}{n})$ factor of the optimum.
\end{theorem}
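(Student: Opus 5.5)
The running-time claim splits into two parts. First, each call to $\mathrm{SDP}(\varepsilon,\xi)$ is a convex program of polynomial size that can be solved to additive accuracy $\eta$ in time polynomial in $n$ and $\log(1/\eta)$. Second, the sweep over $\xi$ uses only polynomially many values, and the best value on this grid is within a $(1+\frac1n)$ factor of the optimum over all $\xi$.

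\emph{Convexity of a single call.} For fixed $\xi$, the objective \eqref{sdp-rlx1} is a sum of terms $\log(\ell_{ij}(V))$. Here $\ell_{ij}(V) = \|B_i\|_V^2+\|B_j\|_V^2+2\langle B_i,B_j\rangle_V = (B_i+B_j)^\top V (B_i+B_j)$ is linear in the PSD matrix $V$. Since $\log$ is concave and nondecreasing, the objective is concave in $V$. The constraints \eqref{sdp-rlx2}--\eqref{sdp-rlx6} are linear in the joint Gram matrix of $\{u_i\}\cup\{v_i\}$ together with the PSD constraint. I would write this Gram matrix as one $2n\times 2n$ PSD block matrix $W$, so that \eqref{sdp-rlx3} is linear in the off-diagonal block. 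The feasible set is then compact: the diagonal of $U$ is fixed to $1$, and $\mathrm{tr}\,V\le n$. The ellipsoid or an interior-point method therefore solves the problem in polynomial time, provided two conditions hold.

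\begin{itemize}
\item[(a)] \emph{Bounded objective.} The bounded-entry assumption $|B_{ij}|\le M$ gives $\ell_{ij}(V)\le 4 n^2M^2\,\mathrm{tr}V\le 4n^3M^2$, which caps the objective from above.
\item[(b)] \emph{Well-conditioned inner ball.} This is where positive definiteness of the optimal $V^*$ enters. If $\lambda_{\min}(V^*)\ge n^{-c}$, then $\ell_{ij}(V^*)\ge n^{-c}\|B_i+B_j\|^2$, which is bounded below unless $B_i=-B_j$; such pairs would have to be excluded or handled separately. The $\log$ terms are then Lipschitz with polynomially bounded constant in a neighbourhood of $V^*$. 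I also need $\varepsilon\cdot\OPT$ to be strictly below $\OPT$ for $\varepsilon<1$, so that \eqref{sdp-rlx3} has slack. Together these yield a ball of radius $n^{-O(1)}$ inside the feasible region around a scaled copy of $V^*$ mixed with the unconstrained SDP optimum.
\end{itemize}

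\emph{Discretising $\xi$.} The term $-\log\xi$ and the constraint \eqref{sdp-rlx2} are the only places $\xi$ appears. Define $F(\xi)=\max\{\cdot\}$. I would show that $F$ is monotone up to the $-\log\xi$ term. The feasible set grows with $\xi$, so $F(\xi)+\log\xi$ is nondecreasing. Hence for $\xi\le\xi'\le \xi(1+\frac1n)$ we get $F(\xi)\ge F(\xi') - \log(1+\frac1n) \ge F(\xi')-\frac1n$. This is the multiplicative closeness, after exponentiating to the unbalance quantity $\SDP$.

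\emph{Range and grid size.} The range of $\xi$ is bracketed as follows. The upper end $n^3M^2$ suffices because \eqref{sdp-rlx2} is automatically satisfied beyond it, by the bound in (a). The lower end $h=n^{-c^*}$ suffices because the relevant optimum has $\xi^*=\frac4n\sum_i\|B_i\|_{V^*}^2\ge 4n^{-c}\min_i\|B_i\|^2$ by (b). The grid in Algorithm~\ref{algmain} has additive step $h/n$, which is at most a relative step $\frac1n$ at every point, since $\xi\ge h$. It has $n^{4+c^*}M^2/h\cdot$ polynomially many points. A potentially cleaner alternative is a geometric grid, with $O(n\log(n^3M^2/h))$ points. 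Either way, the number of SDP calls is polynomial, each call is polynomial, and the rounding $\RND$ is one Cholesky factorisation plus a Gaussian sample.

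\emph{Main obstacle.} I expect step (b) to be the hardest: turning ``$V^*$ positive definite'' into a quantitative $n^{-O(1)}$ lower bound on its smallest eigenvalue. That bound is what makes $h$ and the solver accuracy polynomial. I would first try to absorb it into the assumption itself, reading it as $\lambda_{\min}(V^*)\ge n^{-c^*}$. Failing that, I would try to derive it from the bounded rational entries of $A,B$ via standard bit-complexity bounds on SDP optima.
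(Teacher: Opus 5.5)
Your argument follows the paper's route. The paper also brackets the relevant $\xi^*=\frac4n\sum_i\|B_i\|_{V^*}^2$ between $h$ and $n^3(\max_{i,j}|B_{ij}|)^2$. Its top end comes from $\lambda_{\max}(V^*)\le n$ and bounded entries. Its bottom end comes from silently reading positive definiteness as $\lambda_{\min}(V^*)\ge h$, since it calls $\lambda_{\min}(V^*)<h$ a contradiction. It then uses that the additive step $h/n$ is a relative step of at most $\frac1n$, and counts polynomially many calls. Several of your steps supply what the paper simply takes for granted: the per-call analysis (concavity of $\log\,(B_i+B_j)^\top V(B_i+B_j)$, bounded range and an inner ball), the joint $2n\times 2n$ Gram matrix that \eqref{sdp-rlx3} actually requires, and the explicit quantitative eigenvalue assumption.

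One step is stated backwards. Monotonicity of $G(\xi)=F(\xi)+\log\xi$ gives, for $\xi\le\xi'\le\xi(1+\frac1n)$, the bound $F(\xi')\ge F(\xi)-\log(1+\frac1n)$, so the \emph{larger} point is nearly as good. Your inequality $F(\xi)\ge F(\xi')-\log(1+\frac1n)$ does not follow, and it can fail outright: lowering $\xi$ below the target can make \eqref{sdp-rlx2} and \eqref{sdp-rlx3} jointly infeasible. You should instead take the grid point in $[\xi^*,\xi^*+\frac hn]\subseteq[\xi^*,\xi^*(1+\frac1n)]$. There $(U^*,V^*)$ is itself feasible, which gives the comparison with $V^*$ needed in \eqref{eqn:ub-step3} up to the factor $1+\frac1n$. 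Once corrected, this is a more meaningful guarantee than the paper's, which only asserts that $\widetilde\xi$ is $(1+\frac1n)$-close to $\xi^*$.

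Two small points. First, the paper's grid has about $n^4M^2/h=n^{4+c^*}M^2$ points, which is polynomial only for polynomially bounded $M$. Your geometric grid removes this dependence, and it weakens the needed assumption to a polynomial bound on $\log(1/h)$. Second, your lower bound $\xi^*\ge 4\lambda_{\min}(V^*)\min_i\|B_i\|^2$ also needs $\min_i\|B_i\|$ bounded below, which the paper assumes silently as well.
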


\noindent Proofs are given in Supplementary Section~\ref{sec:proof-main}.

\paragraph{Empirical validation on Gaussian Orthogonal Ensemble.} SDP rounding techniques for boolean optimization~\cite{gw,alon-naor,khot-naor} extend to real-valued $p \to q$ norms~\cite{bhatarx,bhatarx2,bhat}; the connection to $\ell_1$-PCA in the unit-rank case is established in~\cite{l1pca1}. We validated Algorithm~\ref{algmain} on synthetic $\ell_1$-PCA instances with Gaussian Orthogonal Ensemble matrices (Supplementary Fig.~\ref{fig:l1pca_quad}). The relative unbalance factor ranges from $1.8$ at small $\varepsilon$ to $1.3$ at $\varepsilon \to 1$, and the objective ratio varies linearly with $\varepsilon$, confirming that the SDP constraint remains tight.

\section{Application: Conditioned Protein Conformational Landscapes}\label{sec:application}

\subsection{Pipeline: From PDB Structure to Conditioned Conformations}\label{sec:pipeline}

We apply the framework of Section~\ref{sec:theory} to the small regulatory protein Ubiquitin (76 residues, PDB: 1UBQ~\cite{vijaykumar1987}). We emphasize that the results here are proof of concept and we discuss the limitations towards the end of the paper. 

The pipeline from crystallographic structure to conditioned Ramachandran plots is summarized in Algorithm~\ref{algpdb}. The key steps are: (i) compute J-coupling constants from PDB backbone dihedral angles via the forward Karplus equation~\cite{karplus1959}; (ii) construct the spin Hamiltonian; (iii) solve the constrained SDP; and (iv) recover conditioned backbone angles via Karplus inversion.

\begin{algorithm}[htb!]
\caption{Conditioned secondary structure via forward Karplus and SDP from PDB structure data}\label{algpdb}
\begin{algorithmic}[1]
\STATE {\bfseries Input:} PDB crystal structure with backbone dihedral angles $(\phi_i, \psi_i)$ for $n$ residues; parameter $\varepsilon\in(0,1)$.
\STATE {\bfseries Output:} Conditioned backbone dihedral angles $(\phi'_i, \psi'_i)$ and Ramachandran plots.
\STATE \textbf{Forward Karplus:} For each residue $i$, compute ${}^3J_{\text{HC}}(\phi_i)$ and ${}^2J_{\text{CAN}}(\psi_i)$ from the PDB angles.
\STATE \textbf{Hamiltonian construction:} For each residue pair $(i,j)$, set $H_{ij} \leftarrow \sum_k\, J_{ij}^{(k)}$; set missing entries to $0$.
\STATE \textbf{PSD shift and SOS decomposition:} Compute $\tilde{H} \leftarrow H + |\lambda_{\min}(H)|\cdot\mathrm{I}$; factorize $\tilde{H} = AA^T$ (Cholesky).
\STATE \textbf{Unbalanced SDP:} Set $B \leftarrow A$. Run Algorithm~\ref{algmain} with parameter $\varepsilon$. Record the most unbalanced solution $\vec{s}' \leftarrow \mathrm{sign}(\tilde{x})$.
\STATE \textbf{Ground-state approximation:} Solve the unconstrained SDP, round $\vec{s} \leftarrow \mathrm{sign}(X^* g)$ over 1000 draws; retain the $\vec{s}$ minimizing $\vec{s}^T H \vec{s}$.
\STATE \textbf{Sign-flip:} Construct $H_{\text{flip}}$ by negating $H_{ij}$ where $s_i s_j \ne s'_i s'_j$.
\STATE \textbf{Karplus inversion (original):} Invert ${}^3J$ for $\phi_i$; invert ${}^2J$ for $\psi_i$. Among the multiple solutions of the non-injective Karplus equation, select the branch nearest to the known (from PDB) ground-state angle.
\STATE \textbf{Karplus inversion (conditioned):} Apply blending $J_{\text{eff}} = (2\alpha - 1)J$ for flipped pairs ($\alpha = 0.9$). Invert to obtain $\phi'_i, \psi'_i$, again selecting the nearest branch.
\RETURN Ramachandran plots for original and conditioned structures.
\end{algorithmic}
\end{algorithm}

\subsection{Hamiltonian Construction and Free-Energy Landscape}\label{sec:felscape}

The backbone dihedral angles $(\phi_i, \psi_i)$ for all 76 residues are extracted from the 1UBQ crystal structure (1.8~\AA\ resolution). We compute two types of J-coupling constants: (i)~three-bond couplings ${}^3J_{\text{HC}}(\phi_i)$ for residues $i = 2, \ldots, 76$, yielding 75 values for diagonal entries $H_{ii}$; and (ii)~two-bond couplings ${}^2J_{\text{CAN}}(\psi_i)$ for residues $i = 1, \ldots, 75$, yielding 75 values for off-diagonal entries $H_{i,i+1}$ (see Methods for Karplus equations and fitted coefficients). This gives 150 PDB-derived J-coupling constants with complete backbone coverage. The Hamiltonian is shifted to positive semidefinite form $\tilde{H} = H + |\lambda_{\min}(H)| \cdot I$ and factorized as $\tilde{H} = AA^T$ via Cholesky decomposition (Supplementary Fig.~\ref{fig:pdb-ham}).

To identify the free-energy optimum, we swept the entropy budget $y$ from 1 to $20n$ and computed
\[
  F(y) \;=\; \max\bigl\{\|Az\|_1-y : \|z\|_2^2 \le y\bigr\}
\]
at each grid point. The resulting landscape (Fig.~\ref{fig:fe-landscape}) is concave, with a maximum at $y^* \approx 592$ ($\approx 7.8\,n$). The optimum lies at an interior point: restricting to $y = n = 76$ (the energy-only case, ignoring entropy) gives a free-energy roughly 40\% below the optimum, confirming that the entropy contribution is substantial.

\begin{figure}[htbp]
\centering
\includegraphics[width=0.6\textwidth]{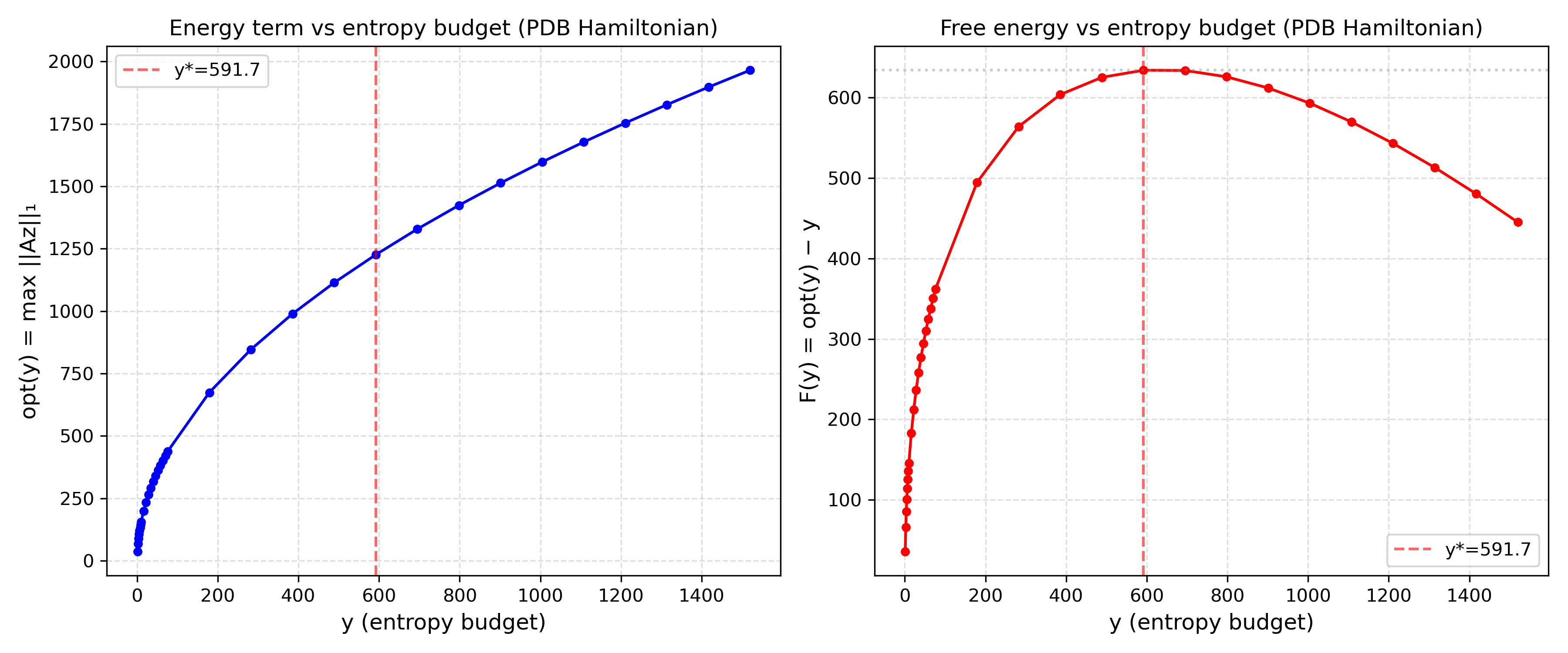}
\caption{Free-energy landscape $F(y) = \text{opt}(y) - y$ for the Ubiquitin Hamiltonian ($n = 76$). The concave shape confirms a unique free-energy optimum at $y^* \approx 592$ ($\approx 7.8\,n$), well above the energy-only value $y = n$.}
\label{fig:fe-landscape}
\end{figure}

\subsection{Conditioned Structural Ensemble via $\varepsilon$-Sweep}\label{sec:epssweep}

The PDB ground-truth Ramachandran plot for 1UBQ shows Ubiquitin's mixed $\alpha/\beta$ fold: 40 $\beta$-sheet residues, 16 $\alpha_R$-helix, 5 left-handed $\alpha$, and 13 coil (Supplementary Fig.~\ref{fig:pdb-rama-baseline}). This serves as the baseline for the conditioned structures.

Using $y^* = 592$ as the entropy budget, we solved the constrained SDP at five conditioning levels $\varepsilon \in \{0.1, 0.3, 0.5, 0.7, 0.9\}$. At each $\varepsilon$, we: (i)~solve the unconstrained SDP at $y^*$; (ii)~solve the constrained SDP to obtain the conditioned spin configuration; (iii)~construct the sign-flipped Hamiltonian; and (iv)~recover backbone dihedral angles $(\phi, \psi)$ via Karplus inversion with blending weight $\alpha = 0.9$ (see Methods).

At $\varepsilon = 0.1$, the coarse secondary structure closely matches the PDB ground truth: 42 $\beta$-sheet (57\%), 25 $\alpha$-helix (34\%), 3 left-$\alpha$ (4\%), and 4 coil (5\%). At $\varepsilon = 0.9$, the classification is nearly identical: 42 $\beta$-sheet, 24 $\alpha$-helix, 5 left-$\alpha$, and 3 coil (Fig.~\ref{fig:eps-sweep-rama}). The preservation of Ubiquitin's $\beta$-sheet character across all conditioning levels is consistent with the known mixed-$\alpha$/$\beta$ fold and indicates that the SDP produces physically plausible secondary structure assignments.

\begin{figure}[htbp]
\centering
\includegraphics[width=0.85\textwidth]{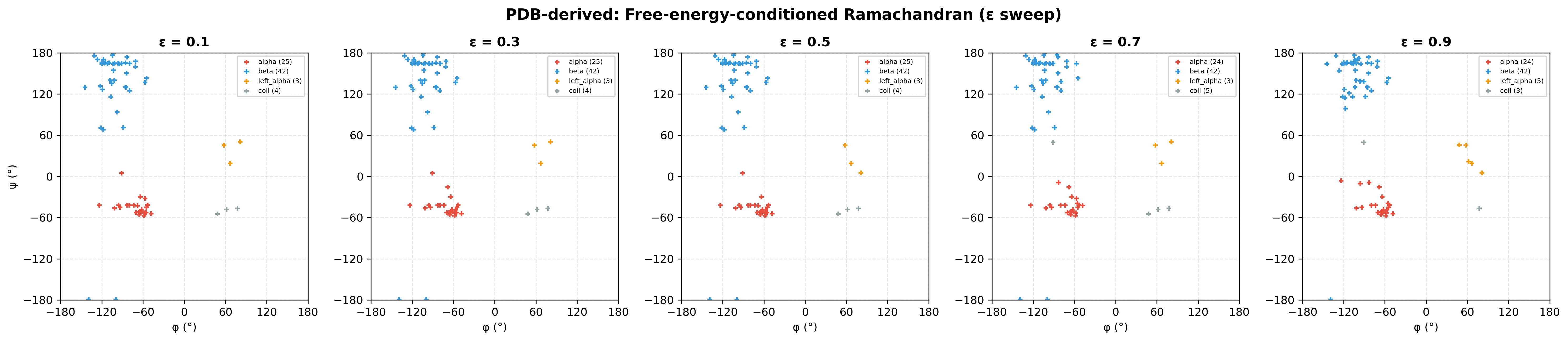}
\caption{Free-energy-conditioned Ramachandran plots~\cite{ramachandran1963} across the $\varepsilon$ sweep. Each panel shows the backbone dihedral angles $(\phi, \psi)$ for 74 resolved residues at the indicated conditioning level.}
\label{fig:eps-sweep-rama}
\end{figure}

\subsection{Structural Stability and Per-Residue Analysis}\label{sec:stability}

\paragraph{Helix-subtype stability.}
Decomposing the $\alpha$-helical residues into canonical subtypes, $\alpha_R$ (standard), $3_{10}$-helix, $\pi$-helix, and $\alpha$-broad, following the geometry of Pauling, Corey, and Branson~\cite{pauling1951}, reveals further resolution (Fig.~\ref{fig:ss-vs-eps}, Table~\ref{tab:subtypes}).

\begin{table}[htbp]
\centering
\caption{Helix subtype composition as a function of conditioning parameter $\varepsilon$. Only 1--2 residue shifts occur between consecutive $\varepsilon$ levels.}
\label{tab:subtypes}
\begin{tabular}{lccccc}
\toprule
\textbf{Subtype} & $\varepsilon{=}0.1$ & $\varepsilon{=}0.3$ & $\varepsilon{=}0.5$ & $\varepsilon{=}0.7$ & $\varepsilon{=}0.9$ \\
\midrule
$\alpha_R$ (standard) & 18 (24\%) & 17 (23\%) & 18 (24\%) & 17 (23\%) & 17 (23\%) \\
$3_{10}$-helix & 1 (1\%) & 2 (3\%) & 1 (1\%) & 2 (3\%) & 2 (3\%) \\
$\alpha$-broad & 6 (8\%) & 6 (8\%) & 6 (8\%) & 5 (7\%) & 5 (7\%) \\
$\pi$-helix & 0 & 0 & 0 & 0 & 0 \\
$\beta$-sheet & 42 (57\%) & 42 (57\%) & 42 (57\%) & 42 (57\%) & 42 (57\%) \\
Left-$\alpha$ & 3 (4\%) & 3 (4\%) & 3 (4\%) & 3 (4\%) & 5 (7\%) \\
Coil & 4 (5\%) & 4 (5\%) & 4 (5\%) & 5 (7\%) & 3 (4\%) \\
\bottomrule
\end{tabular}
\end{table}

The dominant $\alpha_R$ count varies by at most 1 residue (17--18) across the full $\varepsilon$ range; $\beta$-sheet is invariant at 42 residues; and the only changes involve 1--2 residues exchanging between $3_{10}$-helix, $\alpha$-broad, left-$\alpha$, and coil. This stability is consistent with the energy-entropy competition in equation~\eqref{Psi:eqn}: the ground-state Hamiltonian constrains the conformational space even under loose conditioning ($\varepsilon = 0.1$).

\begin{figure}[htbp]
\centering
\includegraphics[width=0.8\textwidth]{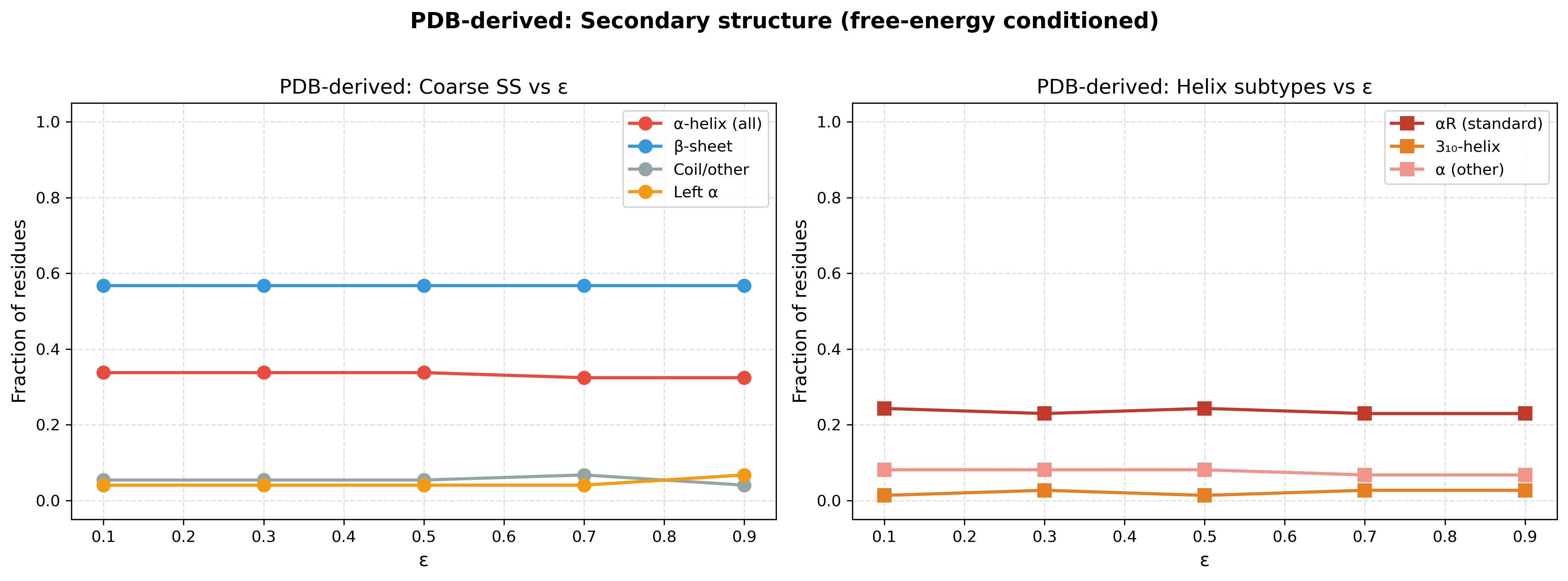}
\caption{Secondary structure composition vs.\ $\varepsilon$. \textbf{Left:} Coarse classification ($\beta$-sheet at 57\%, $\alpha$-helix at 34\%) is flat across conditioning levels. \textbf{Right:} Helix subtypes ($\alpha_R$, $3_{10}$, $\alpha$-broad) show only 1--2 residue fluctuations.}
\label{fig:ss-vs-eps}
\end{figure}

\paragraph{Per-residue strip diagram.}
Figure~\ref{fig:ss-strip} shows, for each residue, how its secondary structure assignment evolves with $\varepsilon$. The PDB ground truth appears as the bottom row. Subtype transitions at $\varepsilon \ge 0.7$ are localized to specific residues rather than distributed uniformly.

\begin{figure}[htbp]
\centering
\includegraphics[width=0.8\textwidth]{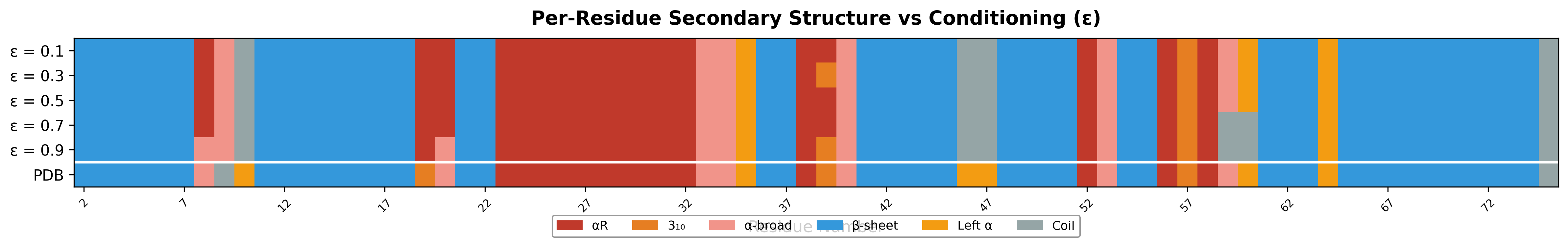}
\caption{Per-residue secondary structure strip diagram across the $\varepsilon$ sweep. Each column is one residue; colour encodes secondary structure subtype. The PDB ground truth is the bottom row. Transitions at $\varepsilon \ge 0.7$ are localized to specific residues.}
\label{fig:ss-strip}
\end{figure}

\paragraph{Per-residue angular drift.}
We computed the per-residue $\psi$ drift $\Delta\psi_i(\varepsilon)$ relative to $\varepsilon = 0.1$ (Fig.~\ref{fig:drift-heatmap}). The $\phi$ drift is identically zero at all conditioning levels because the ${}^3J$ Karplus constraint fully determines $\phi$ from the local coupling; it is therefore omitted.

The $\psi$ drift concentrates on approximately 20 residues at $\varepsilon \ge 0.7$, near positions 1, 6--7, 14--16, 40, 47--50, 62--63, and 67--75, with shifts up to $30^\circ$. These correspond to the residues undergoing the $\alpha_R \to \alpha$-broad / $3_{10}$ transitions in Table~\ref{tab:subtypes}. Residues in the main $\alpha$-helix (positions 23--34) show minimal drift, consistent with the known stability of this element in the Ubiquitin fold~\cite{vijaykumar1987}. The localization of angular drift to specific residues, rather than uniform perturbation of the backbone, is characteristic of physically meaningful structural transitions.

\begin{figure}[htbp]
\centering
\includegraphics[width=0.8\textwidth]{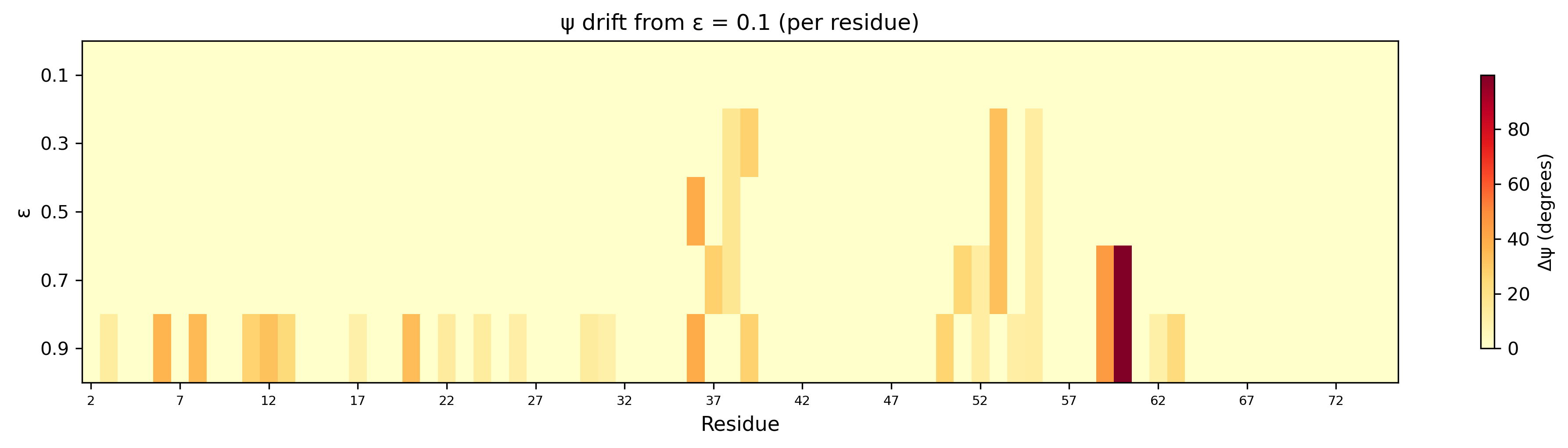}
\caption{Per-residue $\psi$ drift relative to $\varepsilon = 0.1$. Rows correspond to conditioning levels; columns to residues. Approximately 20 residues shift by up to $30^\circ$ at $\varepsilon \ge 0.7$. The $\phi$ drift (not shown) is identically zero.}
\label{fig:drift-heatmap}
\end{figure}

\subsection{Validation}\label{sec:validation}

\paragraph{Pipeline self-consistency.}
At $\varepsilon = 0.9$ (nearest to the ground state), all 75 $\phi$ and 75 $\psi$ angles are recovered within $0.05^\circ$ of the crystallographic values (Supplementary Fig.~\ref{fig:roundtrip}), confirming that the pipeline introduces negligible numerical error.

\paragraph{Root Mean Square Deviation (RMSD) against the crystal structure.}
The circular RMSD between SDP-recovered $\psi$ angles and the 1UBQ crystal structure is approximately $28^\circ$ for $\varepsilon \in \{0.1, 0.3, 0.5, 0.7\}$ and decreases to $20.6^\circ$ at $\varepsilon = 0.9$ (Supplementary Fig.~\ref{fig:rmsd-vs-eps}). The $\phi$ RMSD is identically zero across all $\varepsilon$, consistent with the ${}^3J$ Karplus locking observed in the drift analysis.

\paragraph{Energy-only vs.\ free-energy comparison.}
A side-by-side comparison at $\varepsilon = 0.9$ using $y = n = 76$ (energy-only) versus $y = y^* \approx 592$ (free-energy optimum) shows that the free-energy-conditioned structure produces a more physically realistic Ramachandran distribution, reflecting the entropy-energy balance at finite temperature (Fig.~\ref{fig:energy-vs-fe}).

\begin{figure}[htbp]
\centering
\includegraphics[width=0.8\textwidth]{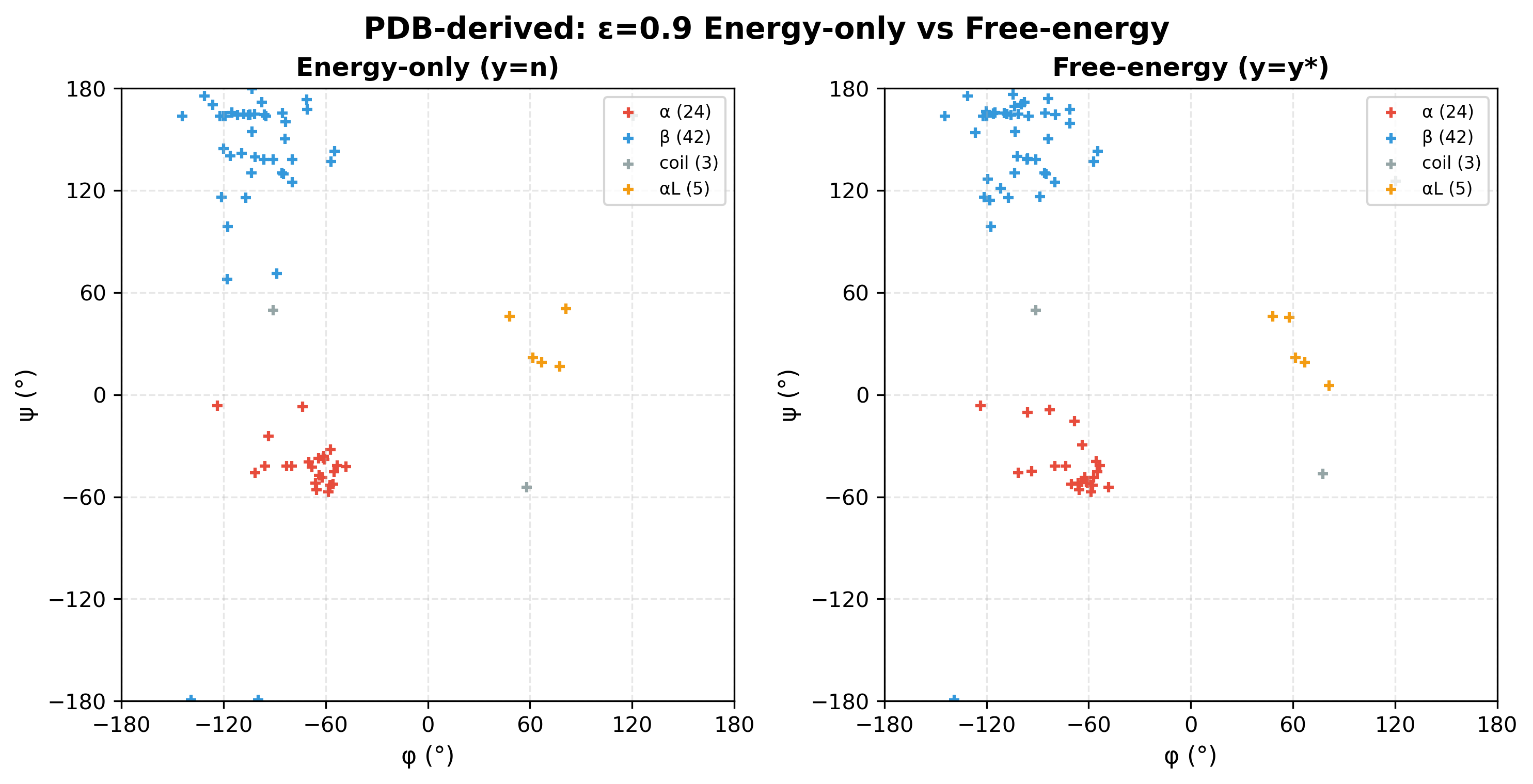}
\caption{Ramachandran plots at $\varepsilon = 0.9$: energy-only ($y = n$, left) vs.\ free-energy ($y = y^*$, right).}
\label{fig:energy-vs-fe}
\end{figure}

\paragraph{Unbalance factor on the protein Hamiltonian.}
Figure~\ref{fig:pdb-bias} shows the relative unbalance factor as a function of $\varepsilon$ for the Ubiquitin Hamiltonian. The factor decreases smoothly from $\sim$1.8 at $\varepsilon = 0.1$ to $\sim$1.2 at $\varepsilon = 0.9$, consistent with the synthetic $\ell_1$-PCA validation (Supplementary Fig.~\ref{fig:l1pca_quad}d).

\begin{figure}[htbp]
\centering
\includegraphics[width=0.45\textwidth]{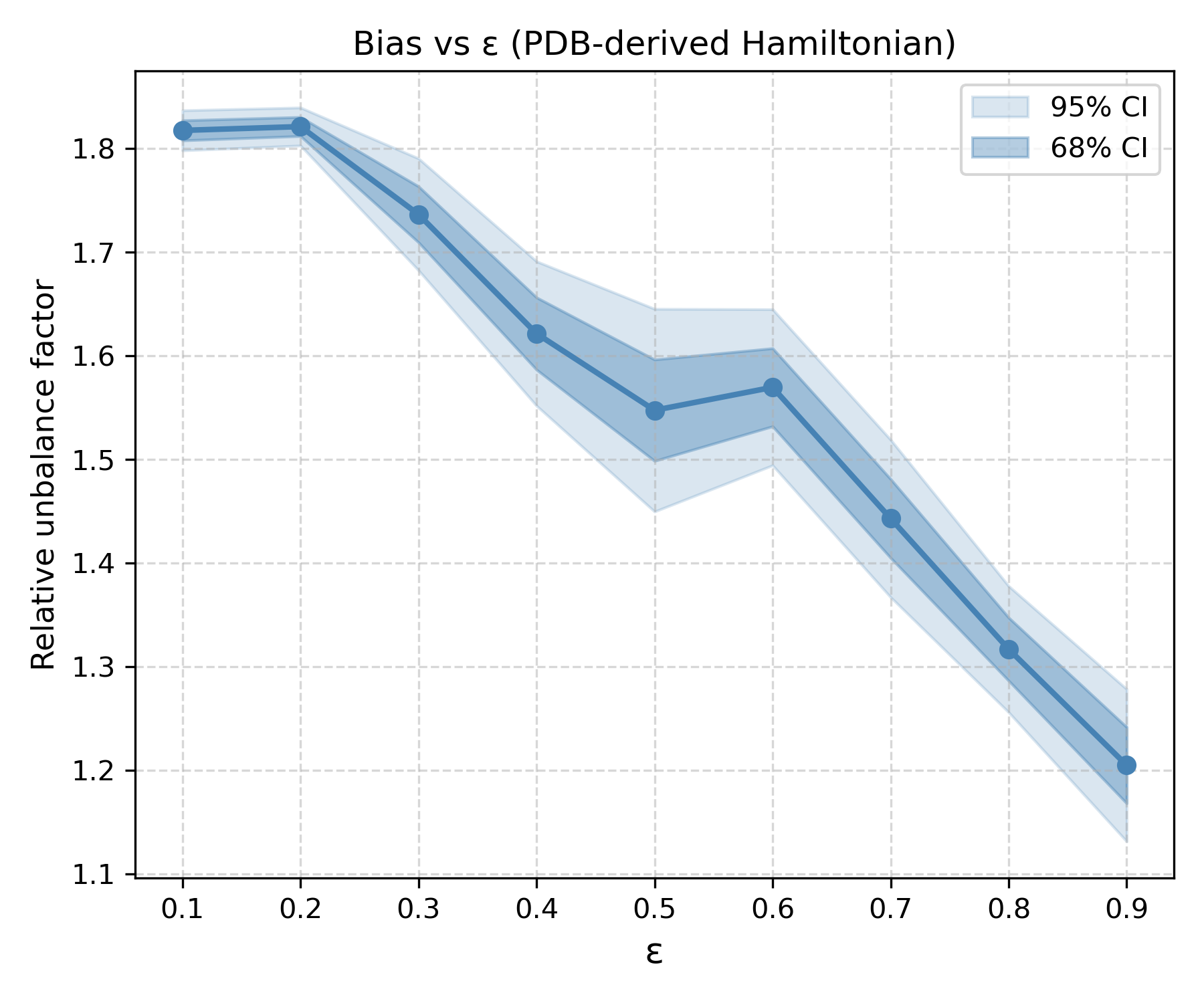}
\caption{Relative unbalance factor vs.\ $\varepsilon$ for the Ubiquitin Hamiltonian. Shaded regions show 68\% and 95\% confidence intervals over Gaussian rounding draws.}
\label{fig:pdb-bias}
\end{figure}

\section{Discussion}\label{sec:discussion}

The results above demonstrate that an SDP framework can be used to compute conditioned free-energy states of protein spin Hamiltonians with provable guarantees. Three aspects merit further discussion: the physical interpretation of the conditioning, the structural selectivity of the results, and the limitations of the current approach.

\paragraph{Physical interpretation of the $\varepsilon$-sweep.}
The parameter $\varepsilon$ controls how close the conditioned solution must remain to the unconstrained optimum via the constraint $\|Ax\|_1 \ge \varepsilon \cdot \OPT$. Large $\varepsilon$ yields structures closest to the native fold; small $\varepsilon$ permits exploration of unbalanced configurations. The stability of secondary structure across the sweep, with $\beta$-sheet completely invariant and helix subtypes shifting by at most 1--2 residues, indicates strong structural constraints imposed by the ground-state Hamiltonian. This stability extends to variation in the blending weight $\alpha$: coarse-level agreement between $\alpha = 0.9$ and $\alpha \in \{0.7, 0.8\}$ exceeds 95\% at every $\varepsilon$ (Supplementary Section~\ref{sec:alpha-sensitivity}).

\paragraph{Selective $\psi$ sensitivity.}
The $\phi$/$\psi$ asymmetry observed in Section~\ref{sec:stability} has a structural interpretation. The $\phi$ angles are locked because each $\phi$ is determined by a single intra-residue ${}^3J$ coupling that is never modified during conditioning. In contrast, $\psi$ angles depend on sequential ${}^2J$ couplings that are subject to sign-flipping, making them responsive to changes in the Hamiltonian. The concentration of drift on $\sim$20 specific residues, rather than uniform perturbation, suggests that these residues occupy shallow regions of the local free-energy surface and may correspond to functionally relevant conformational flexibility.

\paragraph{Validation considerations.}
The conditioned structures represent predictions of higher-energy conformational states that are difficult to validate independently. X-ray crystallography captures only the ground-state minimum, and NMR relaxation dispersion (CPMG, CEST) yields only sparse constraints~\cite{palmer2004}. Direct comparison with molecular dynamics (MD)-sampled excited states or experimentally resolved transition-state structures remains an important future direction.

\paragraph{Limitations.}
On the theoretical side, the guarantee for Algorithm~\ref{algmain} i.e., the "approximation factor" depends on the optimal instance, which is unknown. Ideally, one would have an explicit constant factor approximation guarantee. This is a particular hard problem, as the guarantee has to be strictly better than a 2-approximation for it to be useful. The AM-GM ratio can be close to 1, hence it is still meaningful. For the application to conditioned protein structure prediction, the coarse-grained spin model ($\pm 1$ per residue) simplifies the continuous dihedral degrees of freedom and many-body interactions of real proteins. Furthermore, the ${}^2J$ Karplus coefficients were fitted to the Ubiquitin crystal structure; generalization requires re-fitting or universal parameters. Moreover, the non-injective Karplus equation is resolved by nearest-branch selection relative to known PDB structure, limiting the current framework to small perturbative exploration around a known reference. Finally, the conditioned structures have not been independently validated against experimental results.

\paragraph{Future directions.}
Integration with diffusion-based structure prediction~\cite{baker} could combine the SDP's landscape exploration with 3D coordinate generation. The mathematical framework applies to any quadratic spin Hamiltonian, suggesting applications beyond protein structure to frustrated spin systems in materials science.

\section*{Methods}

\subsection*{Karplus equations and inversion}

Backbone dihedral angles are recovered from J-coupling values by inverting two Karplus-type equations: a three-bond (${}^3J$) equation for $\phi$~\cite{karplus1959,vuister1993} and a two-bond (${}^2J$) equation for $\psi$~\cite{wirmer2002}.

The $\phi$ angle is determined from three-bond couplings:
\begin{equation}
{}^3J(\phi) = 6.51\cos^2(\phi - 60^\circ) - 1.76\cos(\phi - 60^\circ) - 1.60.
\end{equation}

The $\psi$ angle is determined from two-bond ${}^2J_{\text{C}\alpha\text{-N}}$ couplings~\cite{wirmer2002}:
\begin{equation}
{}^2J_{\text{C}\alpha\text{-N}}(\psi) = A\cos^2(\psi + \delta) + B\cos(\psi + \delta) + C,
\end{equation}
with fitted values $A\approx 4.95$, $B\approx 3.64$, $C\approx -7.95$, and $\delta\approx 119.1^\circ$. These coefficients were fitted by least-squares regression using all 75 $\psi$ angles from the 1UBQ crystal structure.

For conditioned structures, a convex blending with weight $\alpha = 0.9$ is applied at sign-flipped residue pairs:
\begin{equation}
J_{\text{eff}} = \alpha\,J + (1-\alpha)\,(-J) = (2\alpha - 1)\,J,
\end{equation}
so that $J_{\text{eff}} = 0.8\,J$ for flipped pairs and $J_{\text{eff}} = J$ otherwise. The ${}^3J$ couplings are never modified.

\subsection*{Secondary structure classification}

Coarse classification follows standard Ramachandran region boundaries~\cite{ramachandran1963,lovell2003}. Fine-grained classification decomposes the $\alpha$-helical region into subtypes based on the canonical helix geometries~\cite{pauling1951}:
\begin{itemize}
  \item $\alpha_R$: $\phi \in [-80^\circ, -40^\circ]$, $\psi \in [-60^\circ, -30^\circ]$
  \item $3_{10}$: $\phi \in [-80^\circ, -40^\circ]$, $\psi \in [-30^\circ, -10^\circ]$
  \item $\pi$: $\phi \in [-80^\circ, -40^\circ]$, $\psi \in [-80^\circ, -60^\circ]$
  \item $\alpha$-broad: remaining residues with $\phi \in [-160^\circ, -20^\circ]$, $\psi \in [-80^\circ, 10^\circ]$
  \item $\beta$-sheet: $\phi \in [-180^\circ, -40^\circ]$, $\psi \in [60^\circ, 180^\circ] \cup [-180^\circ, -120^\circ]$
  \item Left-$\alpha$: $\phi \in [20^\circ, 100^\circ]$, $\psi \in [0^\circ, 80^\circ]$
  \item Coil: all remaining residues
\end{itemize}

\subsection*{Free-energy sweep}

The entropy budget $y$ is swept from 1 to $20n$ (with adaptive $3\times$ extension if $y^*$ hits the boundary) and $F(y) = \max\{\|Az\|_1 : \|z\|_2^2 \le y\} - y$ is computed at each grid point.




\subsection*{Author contributions}
P.W. conceived the study, developed the theoretical framework, designed and implemented the computational pipeline, performed the experiments, and wrote the manuscript. S.K. helped design the first version of the SDP algorithm and S.V. helped with the proof of connection between $2\to 1$ norm computation and free-energy computation.


\clearpage
\bibliographystyle{plain}
\bibliography{sample}

@article{vijaykumar1987,
    author  = {Vijay-Kumar, S. and Bugg, Charles E. and Cook, William J.},
    title   = {Structure of ubiquitin refined at 1.8 {\AA} resolution},
    year    = {1987},
    journal = {Journal of Molecular Biology},
    volume  = {194},
    number  = {3},
    pages   = {531--544},
    doi     = {10.1016/0022-2836(87)90679-6}
}

@article{baker,
author={Joseph L. Watson and David Juergens and Nathaniel R. Bennett and Brian L. Trippe and Jason Yim and Helen E. Eisenach and Woody Ahern and Andrew J. Borst and Robert J. Ragotte and Lukas F. Milles and Basile I. M. Wicky and Nikita Hanikel and Samuel J. Pellock and Alexis Courbet and William Sheffler and Jue Wang and Preetham Venkatesh and Isaac Sappington and Susana Vázquez Torres and Anna Lauko and Valentin De Bortoli and Emile Mathieu and Sergey Ovchinnikov and Regina Barzilay and Tommi S. Jaakkola and Frank DiMaio and Minkyung Baek and David Baker},
title={{De novo design of protein structure and function with RFdiffusion}},
journal={Nature},
year={2023},
volume={620}
}

@article{alon-naor,
author = {Alon, Noga and Naor, Assaf},
title = {Approximating the Cut-Norm via Grothendieck's Inequality},
journal = {SIAM Journal on Computing},
volume = {35},
number = {4},
pages = {787-803},
year = {2006}
}

@article{cw1,
title={{Ising Critical Behavior of Inhomogeneous Curie-Weiss Models and Annealed Random Graphs}},
author={Sander Dommers and Cristian Giardina and Claudio Giberti and Remco van der Hofstad and Maria Luisa Prioriello},
journal={Communications in Mathematical Physics},
year={2016},
volume={348}
}

@article{cw2,
title={{Annealed central limit theorems for the Ising
model on random graphs}},
author={Cristian Giardina and Claudio Giberti and Remco van der Hofstad and Maria Luisa Prioriello},
journal={Latin American Journal of Probability and Mathematical Statistics},
year={2016},
volume={XIII}
}

@article{gw,
author = {Goemans, Michel X. and Williamson, David P.},
title = {{Improved approximation algorithms for maximum cut and satisfiability problems using semidefinite programming}},
year = {1995},
issue_date = {Nov. 1995},
publisher = {Association for Computing Machinery},
address = {New York, NY, USA},
volume = {42},
number = {6},
issn = {0004-5411},
url = {https://doi.org/10.1145/227683.227684},
doi = {10.1145/227683.227684},
journal = {J. ACM},
month = nov,
pages = {1115–1145},
numpages = {31}
}

@article{bhat,
author = {Bhattiprolu, Vijay and Ghosh, Mrinal Kanti and Guruswami, Venkatesan and Lee, Euiwoong and Tulsiani, Madhur},
title = {Inapproximability of Matrix \(\boldsymbol{p \rightarrow q}\) Norms},
journal = {SIAM Journal on Computing},
volume = {52},
number = {1},
pages = {132-155},
year = {2023}
}

@ARTICLE{l1pca1,
  author={Markopoulos, Panos P. and Karystinos, George N. and Pados, Dimitris A.},
  journal={IEEE Transactions on Signal Processing}, 
  title={{Optimal Algorithms for  $L_{1}$-subspace Signal Processing}}, 
  year={2014},
  volume={62},
  number={19},
  pages={5046-5058}
}

@article{khot-naor,
author = {Khot, Subhash and Naor, Assaf},
title = {Grothendieck-Type Inequalities in Combinatorial Optimization},
journal = {Communications on Pure and Applied Mathematics},
volume = {65},
number = {7},
pages = {992-1035},
year = {2012}
}

@inproceedings{cook,
author = {Cook, Stephen A.},
title = {The complexity of theorem-proving procedures},
year = {1971},
isbn = {9781450374644},
publisher = {Association for Computing Machinery},
address = {New York, NY, USA},
booktitle = {Proceedings of the Third Annual ACM Symposium on Theory of Computing},
pages = {151–158},
numpages = {8},
location = {Shaker Heights, Ohio, USA},
series = {STOC '71}
}

@article{hartnew,
author={William Hart and Alantha Newman},
title={{Protein Structure Prediction with Lattice Models}},
journal={Handbook of Computational Molecular Biology},
publisher={Chapman and Hall/CRC},
year={2005}
}

@article{bhatarx,
  author       = {Vijay Bhattiprolu and
                  Mrinalkanti Ghosh and
                  Venkatesan Guruswami and
                  Euiwoong Lee and
                  Madhur Tulsiani},
  title        = {Inapproximability of Matrix p{\(\rightarrow\)}q Norms},
  journal      = {CoRR},
  volume       = {abs/1802.07425},
  year         = {2018},
  url          = {http://arxiv.org/abs/1802.07425},
  eprinttype    = {arXiv},
  eprint       = {1802.07425}
}

@article{bhatarx2,
  author       = {Vijay Bhattiprolu and
                  Mrinalkanti Ghosh and
                  Venkatesan Guruswami and
                  Euiwoong Lee and
                  Madhur Tulsiani},
  title        = {Approximating Operator Norms via Generalized Krivine Rounding},
  journal      = {CoRR},
  volume       = {abs/1804.03644},
  year         = {2018},
  url          = {http://arxiv.org/abs/1804.03644},
  eprinttype    = {arXiv},
  eprint       = {1804.03644}
}

@article{karplus1959,
  author    = {Martin Karplus},
  title     = {{Contact Electron-Spin Coupling of Nuclear Magnetic Moments}},
  journal   = {The Journal of Chemical Physics},
  volume    = {30},
  number    = {1},
  pages     = {11--15},
  year      = {1959},
  doi       = {10.1063/1.1730075}
}

@article{vuister1993,
  author    = {Geerten W. Vuister and Ad Bax},
  title     = {{Quantitative J correlation: a new approach for measuring homonuclear three-bond $J(\mathrm{HN}H\alpha)$ coupling constants in $^{15}$N-enriched proteins}},
  journal   = {Journal of the American Chemical Society},
  volume    = {115},
  number    = {17},
  pages     = {7772--7777},
  year      = {1993},
  doi       = {10.1021/ja00070a024}
}

@article{wirmer2002,
  author    = {Julia Wirmer and Harald Schwalbe},
  title     = {{Angular dependence of ${}^1J(N_i,C_{\alpha i})$ and ${}^2J(N_i,C_{\alpha(i-1)})$ coupling constants measured in J-modulated HSQCs}},
  journal   = {Journal of Biomolecular NMR},
  volume    = {23},
  pages     = {47--55},
  year      = {2002},
  doi       = {10.1023/A:1015384805098}
}

@article{ramachandran1963,
  author    = {G. N. Ramachandran and C. Ramakrishnan and V. Sasisekharan},
  title     = {{Stereochemistry of polypeptide chain configurations}},
  journal   = {Journal of Molecular Biology},
  volume    = {7},
  number    = {1},
  pages     = {95--99},
  year      = {1963},
  doi       = {10.1016/S0022-2836(63)80023-6}
}

@article{konnoyamazaki1991,
  author    = {Hiroshi Konno and Hiroaki Yamazaki},
  title     = {Mean-absolute deviation portfolio optimization model and its applications to {T}okyo stock market},
  journal   = {Management Science},
  volume    = {37},
  number    = {5},
  pages     = {519--531},
  year      = {1991},
  doi       = {10.1287/mnsc.37.5.519}
}

@article{ustunrudin2016,
  author    = {Berk Ustun and Cynthia Rudin},
  title     = {Supersparse linear integer models for optimized medical scoring systems},
  journal   = {Machine Learning},
  volume    = {102},
  number    = {3},
  pages     = {349--391},
  year      = {2016},
  doi       = {10.1007/s10994-015-5528-6}
}

@article{meinshausen2013,
  author    = {Nicolai Meinshausen},
  title     = {Sign-constrained least squares estimation for high-dimensional regression},
  journal   = {Electronic Journal of Statistics},
  pages     = {1607--1631},
  year      = {2013}
}

@article{nnls,
    author = {Martin Slawski and Matthias Hein},
    title = {{Non-negative least squares for high-dimensional linear models: Consistency and sparse recovery without regularization}},
    journal = {Electronic Journal of Statistics},
    year = {2013}
}

@article{alphafold2,
  author    = {John Jumper and Richard Evans and Alexander Pritzel and Tim Green and Michael Figurnov and Olaf Ronneberger and Kathryn Tunyasuvunakool and Russ Bates and Augustin {\v{Z}}{\'i}dek and Anna Potapenko and Alex Bridgland and Clemens Meyer and Simon A. A. Kohl and Andrew J. Ballard and Andrew Cowie and Bernardino Romera-Paredes and Stanislav Nikolov and Rishub Jain and Jonas Adler and Trevor Back and Stig Petersen and David Reiman and Ellen Clancy and Michal Zielinski and Martin Steinegger and Michalina Pacholska and Tamas Berghammer and Sebastian Bodenstein and David Silver and Oriol Vinyals and Andrew W. Senior and Koray Kavukcuoglu and Pushmeet Kohli and Demis Hassabis},
  title     = {Highly accurate protein structure prediction with {AlphaFold}},
  journal   = {Nature},
  volume    = {596},
  number    = {7873},
  pages     = {583--589},
  year      = {2021},
  doi       = {10.1038/s41586-021-03819-2}
}

@article{opls4,
  author    = {Chuanjie Lu and Chao Wu and Dmitry Ghoreishi and Wei Chen and Lingle Wang and Wolfgang Damm and Gail A. Ross and Markus K. Dahlgren and Ellery Russell and Christopher D. Von Bargen and Robert Abel and Richard A. Friesner and Edward D. Harder},
  title     = {{OPLS4}: Improving Force Field Accuracy on Challenging Regimes of Chemical Space},
  journal   = {Journal of Chemical Theory and Computation},
  volume    = {17},
  number    = {7},
  pages     = {4291--4300},
  year      = {2021},
  doi       = {10.1021/acs.jctc.1c00302}
}

@article{barahona1982,
  author    = {Francisco Barahona},
  title     = {On the computational complexity of {I}sing spin glass models},
  journal   = {Journal of Physics A: Mathematical and General},
  volume    = {15},
  number    = {10},
  pages     = {3241--3253},
  year      = {1982},
  doi       = {10.1088/0305-4470/15/10/028}
}

@article{jerrum1993,
  author    = {Mark Jerrum and Alistair Sinclair},
  title     = {Polynomial-time approximation algorithms for the {I}sing model},
  journal   = {SIAM Journal on Computing},
  volume    = {22},
  number    = {5},
  pages     = {1087--1116},
  year      = {1993},
  doi       = {10.1137/0222066}
}

@article{talos,
  author    = {Yang Shen and Ad Bax},
  title     = {{TALOS+}: a hybrid method for predicting protein backbone torsion angles from {NMR} chemical shifts},
  journal   = {Journal of Biomolecular NMR},
  volume    = {44},
  number    = {4},
  pages     = {213--223},
  year      = {2009},
  doi       = {10.1007/s10858-009-9333-z}
}

@article{csrosetta,
  author    = {Yang Shen and Oliver Lange and Frank Delaglio and Paolo Rossi and James M. Aramini and Gaohua Liu and Alexander Eletsky and Yibing Wu and Kiran K. Singarapu and Alexander Lemak and Alexandr Ignatchenko and Cheryl H. Arrowsmith and Thomas Szyperski and Gaetano T. Montelione and David Baker and Ad Bax},
  title     = {Consistent blind protein structure generation from {NMR} chemical shift data},
  journal   = {Proceedings of the National Academy of Sciences},
  volume    = {105},
  number    = {12},
  pages     = {4685--4690},
  year      = {2008},
  doi       = {10.1073/pnas.0800256105}
}

@book{pathria2011,
  author    = {R. K. Pathria and Paul D. Beale},
  title     = {Statistical Mechanics},
  publisher = {Academic Press},
  edition   = {3rd},
  year      = {2011}
}

@article{pauling1951,
  author    = {Linus Pauling and Robert B. Corey and H. R. Branson},
  title     = {The structure of proteins: two hydrogen-bonded helical configurations of the polypeptide chain},
  journal   = {Proceedings of the National Academy of Sciences},
  volume    = {37},
  number    = {4},
  pages     = {205--211},
  year      = {1951},
  doi       = {10.1073/pnas.37.4.205}
}

@article{lovell2003,
  author    = {Simon C. Lovell and Ian W. Davis and W. Bryan Arendall and Paul I. W. de Bakker and J. Michael Word and Michael G. Prisant and Jane S. Richardson and David C. Richardson},
  title     = {Structure validation by {C}$\alpha$ geometry: $\phi$, $\psi$ and {C}$\beta$ deviation},
  journal   = {Proteins: Structure, Function, and Bioinformatics},
  volume    = {50},
  number    = {3},
  pages     = {437--450},
  year      = {2003},
  doi       = {10.1002/prot.10286}
}

@article{rosetta,
  author    = {Andrew Leaver-Fay and Michael Tyka and Steven M. Lewis and Oliver F. Lange and James Thompson and Ron Jacak and Kristian W. Kaufman and P. Douglas Renfrew and Colin A. Smith and Will Sheffler and Ian W. Davis and Seth Cooper and Adrien Treuille and Daniel J. Mandell and Florian Richter and Yih-En Andrew Ban and Sarel J. Fleishman and Jacob E. Corn and David E. Kim and Sergey Lyskov and Monica Berrondo and Stuart Mentzer and Zoran Popovi{\'c} and James J. Havranek and John Karanicolas and Rhiju Das and Jens Meiler and Tanja Kortemme and Jeffrey J. Gray and Brian Kuhlman and David Baker and Philip Bradley},
  title     = {{ROSETTA3}: an object-oriented software suite for the simulation and design of macromolecules},
  journal   = {Methods in Enzymology},
  volume    = {487},
  pages     = {545--574},
  year      = {2011},
  doi       = {10.1016/B978-0-12-381270-4.00019-6}
}

@article{palmer2004,
  author    = {Arthur G. Palmer and Charles D. Kroenke and J. Patrick Loria},
  title     = {Nuclear magnetic resonance methods for quantifying microsecond-to-millisecond motions in biological macromolecules},
  journal   = {Methods in Enzymology},
  volume    = {339},
  pages     = {204--238},
  year      = {2001},
  doi       = {10.1016/S0076-6879(01)39315-1}
}

@article{rfold,
author = {Minkyung Baek  and Frank DiMaio  and Ivan Anishchenko  and Justas Dauparas  and Sergey Ovchinnikov  and Gyu Rie Lee  and Jue Wang  and Qian Cong  and Lisa N. Kinch  and R. Dustin Schaeffer  and Claudia Millán  and Hahnbeom Park  and Carson Adams  and Caleb R. Glassman  and Andy DeGiovanni  and Jose H. Pereira  and Andria V. Rodrigues  and Alberdina A. van Dijk  and Ana C. Ebrecht  and Diederik J. Opperman  and Theo Sagmeister  and Christoph Buhlheller  and Tea Pavkov-Keller  and Manoj K. Rathinaswamy  and Udit Dalwadi  and Calvin K. Yip  and John E. Burke  and K. Christopher Garcia  and Nick V. Grishin  and Paul D. Adams  and Randy J. Read  and David Baker },
title = {Accurate prediction of protein structures and interactions using a three-track neural network},
journal = {Science},
volume = {373},
number = {6557},
pages = {871-876},
year = {2021}
}

\clearpage
\appendix
\section{Supplementary Information}

\subsection{Notation}\label{sec:notation}

\begin{table}[htbp]
\centering
\caption{Summary of notation used in the paper.}
\label{tab:notation}
\begin{tabular}{ll}
\toprule
\textbf{Symbol} & \textbf{Description} \\
\midrule
$A \in \RR^{m \times m}$ & Square root of positive definite coupling matrix \\
$H(x)$ & Spin Hamiltonian (Equation~\ref{eqn:ham}) \\
$x \in \{\pm 1\}^n$ & Spin configuration \\
$Z$ & Partition function \\
$\Psi(m)$ & Log-cumulant: $\lim_{n\to\infty} \frac{1}{n}\log Z$ \\
$F = U - TS$ & Helmholtz free-energy \\
$\beta = 1/T$ & Inverse temperature \\
$y$ & Entropy budget (trace constraint on $V$) \\
$y^*$ & Optimal entropy budget maximizing $F(y)$ \\
$\varepsilon$ & Conditioning parameter ($\varepsilon \in (0,1]$) \\
$\OPT$ & Unconstrained SDP optimum ($\ell_1$ objective) \\
$\SDP$ & Unbalance factor from Algorithm~\ref{algmain} \\
$\B$ & Bias (unbalance measure, Equation~\ref{unbal:eqn}) \\
$\delta$ & Unbalance with respect to matrix $B$ \\
$U, V \in \RR^{n \times n}$ & SDP solution matrices (positive semidefinite) \\
$\phi_i, \psi_i$ & Backbone dihedral angles for residue $i$ \\
${}^3J, {}^2J$ & Three-bond and two-bond J-coupling constants \\
$\alpha$ & Blending weight for conditioned Karplus inversion \\
$\step(\cdot)$ & Sign function \\
\bottomrule
\end{tabular}
\end{table}
\clearpage

\subsection{Theorem and Lemma Statements for Algorithm~\ref{algmain}}\label{sbs:thm-lm-stmt}
Recall that given a matrix $B\in\RR^{n\times n}$ and vector $x\in\RR^n$, let $B_i$ denote the $i^{th}$ row of $B$. We say that $x$ is {\em $\delta$-unbalanced with respect to $B$}, if:
\begin{equation}
    \delta=\frac{1}{n^2}\cdot\sum_{i,j\in[n]}\left(\frac{\step(\langle B_i,x\rangle)+\step(\langle B_j,x\rangle)}{2}\right)^2,\label{unbal-rep:eqn}
\end{equation}
where $\step(\cdot)$ denotes the $\pm 1$ valued standard sign function, i.e., $\step(x)=+1$ if and only if $x\ge 0$.

\noindent {\em Unbalanced $2\to1$ norm computation:} Given matrices $A, B\in\RR^{n\times n}$,\footnote{$A$ can be a rectangular, but to keep the number of variables tracked reasonable in the proofs, we make it square here.} we want to compute an optimal $x\in\RR^n$ that maximizes $\|Ax\|_1$, while ensuring:
\begin{itemize}
    \item $\|x\|\le 1$, where $\|\cdot\|$ denotes $2$-norm, and
    \item $x$ is $\delta$-unbalanced with respect to $B$, for a given $\delta\in[\frac{1}{2},1]$.
\end{itemize}
Since the above problem is computationally intractable~\cite{bhat}, we focus on heuristic algorithms. We designed an SDP based algorithm (Algorithm~\ref{algmain}) that sweeps the parameter space to compute an approximate solution. Theorem~\ref{main:thm} and its underlying lemmas are the main contributions stated in this section. Their proofs are given in Section~\ref{sbs:det-proofs}.

\paragraph{Non-linear optimization}
A reasonable proxy for maximizing the LHS of Equation~\ref{unbal:eqn} (equivalently Equation~\ref{unbal-rep:eqn}) is to maximize its expected value that arises during Gaussian rounding, i.e., design a randomized algorithm like~\cite{gw} that maximizes the amount of {\em expected} unbalance, while simultaneously ensuring that the $\ell_1$ objective remains $\Omega(\OPT)$, where $\OPT$ is its optimal value. The objective of such a SDP relaxation can be written explicitly using Grothendieck's identity (Lemma~\ref{lm:grot2}). The conclusion being that we can compute unbalanced solutions, which are maximally unbalanced in expectation by solving the following maximization problem:
\begin{eqnarray}
    \max_{U,V\in\RR^{n\times n}} & \frac{1}{2n^2}\cdot\sum_{i,j\in[n]}\left(1+\frac{\langle B_i,B_j \rangle_V}{\|B_i\|_V\cdot\|B_j\|_V}\right),\label{unbal2:eqn}\\
    \mathrm{s.t.}&\ \sum_{i,j\in[n]} A_{ij}\langle v_i,u_j\rangle\ge\varepsilon\cdot\OPT,\quad \varepsilon\in(0,1) \label{sdp-constr:eqn}\\
    \quad &\frac{1}{n}\cdot\sum_{i\in[n]}\|v_i\|^2\le 1,\label{trconstr2:eqn}\\
    \quad &\forall i\in[n]:\ \|u_i\|=1,\label{absconstr2:eqn}\\
    \quad &\forall i,j\in[n]:\ U_{ij}=\langle u_i,u_j\rangle,\ V_{ij}=\langle v_i,v_j \rangle,\label{defmat:eqn}
\end{eqnarray}
where $U, V\in\RR^{n\times n}$ are positive semidefinite matrices, and for vectors $z, z'\in\RR^n$, we denote $\|z\|_V^2:=\langle zV, z \rangle$ and $\langle z,z' \rangle_V:=\langle zV,z' \rangle$. The solution vectors $v_i$ and $u_i$ map to real valued variables $x_i$ and $y_i$ using a~\cite{gw} like Gaussian rounding algorithm (see Algorithm~\ref{alground}).\footnote{Note that a solution of the optimization problem specified by maximizing the objective in Equation~\ref{unbal2:eqn}, subject to constraints in Equations~\ref{sdp-constr:eqn},~\ref{trconstr2:eqn},~\ref{absconstr2:eqn} and~\ref{defmat:eqn} can be translated (up to constant factor loss) to a solution for a version of the unbalanced $2\to 1$ norm optimization problem that consists of maximizing the objective specified by the LHS of Equation~\ref{sdp-constr:eqn} subject to constraints in Equations~\ref{unbal2:eqn},~\ref{trconstr2:eqn},~\ref{absconstr2:eqn} and~\ref{defmat:eqn}, by simply searching over $\varepsilon\in(0,1]$ in the former problem. 
}

However, Equation~\ref{unbal2:eqn} is not concave, and neither can it be made concave by easy manipulation. An easy heuristic to linearize the objective is to assume $\forall i,j\in[n]:\ \|B_i\|_{V^*}=\|B_j\|_{V^*}$ holds for the optimal solution. But, that would be too restrictive. For example, the optimal solution matrix $V^*$ could have $o(1)$ fraction of the total trace allocated to a few large unequal values, and the rest of the indices have $\|B_i\|_{V^*}=\|B_j\|_{V^*}$. That would be missed by the overtly simple heuristic above. In Algorithm~\ref{algmain}, we present a solution that is substantially more general than the strategy above. 

We can approximately solve the non-convex optimization problem specified by Equations~\ref{unbal2:eqn},~\ref{sdp-constr:eqn},~\ref{trconstr2:eqn},~\ref{absconstr2:eqn} and~\ref{defmat:eqn} using SDP rounding in Algorithm~\ref{algmain}, which is parametrized by $\varepsilon\in(0,1]$. Higher the $\varepsilon$, closer is the rounded $\ell_1$ objective value that can be obtained using Algorithm~\ref{algmain} to $\OPT$. Theorem~\ref{main:thm}, provides a trade-off between the $\ell_1$ objective as well as the unbalance factor for Algorithm~\ref{algmain}, and Theorem~\ref{thm:polytime} shows that Algorithm~\ref{algmain} requires only polynomial time under a mild assumption (positive definiteness of $V^*$).
\begin{theorem}\label{act-main:thm}
Assume that the optimization problem specified by Equations~\ref{unbal2:eqn},~\ref{sdp-constr:eqn},~\ref{trconstr2:eqn},~\ref{absconstr2:eqn} and~\ref{defmat:eqn} admits a solution with objective value $\B$ and positive definite matrices $U^*, V^*$. Assume further that $\forall i,j\in[n]: \|B_i+B_j\|_{V^*}>0$. Then, for any $\varepsilon\in (0,1]$, Algorithm~\ref{algmain} computes a $\SDP$-unbalanced solution to the unbalanced $2\to 1$ norm problem, such that:\footnote{Given a positive sequence $\{z_i\}_{i\in[n]}$, let $\AM(\{z_i\}_{i\in[n]})$, $\GM(\{z_i\}_{i\in[n]})$, and $\HM(\{z_i\}_{i\in[n]})$ denote its arithmetic, geometric and harmonic means.}
\begin{itemize}
    \item Expected rounded $\ell_1$ objective value of the solution is at least $\Omega(\varepsilon\cdot\OPT)$,\footnote{The constant in the $\Omega$ notation is an absolute constant (see Theorem~\ref{thm:round}).} and
    \item The unbalance factor (measured by Equation~\ref{unbal2:eqn}) is at least:\footnote{Conversely, in the easier direction, the unbalance factor is at most: $\SDP\le 2\cdot\B$ (see Theorem~\ref{thm:main1}).}
    \begin{equation}
        \SDP\ge\frac{\GM\left(\{\|B_i+B_j\|_{V^*}^4\}_{i,j\in[n]}\right)^\frac{1}{2}}{\AM\left(\{\|B_i+B_j\|_{V^*}^4\}_{i,j\in[n]}\right)^\frac{1}{2}}\cdot\frac{\HM\left(\{\|B_i\|_{V^*}^2\|B_j\|_{V^*}^2\}_{i,j\in[n]}\right)^\frac{1}{2}}{\AM\left(\{\|B_i\|_{V^*}^2\|B_j\|_{V^*}^2\}_{i,j\in[n]}\right)^\frac{1}{2}}\cdot\B.\label{eqn:mainthm}
    \end{equation}
\end{itemize}
\end{theorem}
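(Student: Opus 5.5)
The proof splits into the two bullets: the $\ell_1$ guarantee comes from rounding alone, and the unbalance guarantee comes from comparing the log-objective of $\mathrm{SDP}(\varepsilon,\xi)$ with the true nonconvex objective (Equation~\ref{unbal2:eqn}) at a well-chosen $\xi$ in the sweep.

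\textbf{The $\ell_1$ bound.} Every feasible point of $\mathrm{SDP}(\varepsilon,\xi)$ satisfies constraint~\eqref{sdp-rlx3}, so $\sum_{i,j}A_{ij}\langle v_i,u_j\rangle\ge\varepsilon\cdot\OPT$. In Algorithm~\ref{alground} we round with $x_i=\langle v_i,g\rangle$ and $y_j=\step(\langle u_j,g\rangle)$, where $\|u_j\|=1$. The Gaussian identity $\EE[\langle v,g\rangle\,\step(\langle u,g\rangle)]=\sqrt{2/\pi}\,\langle v,u\rangle$ for unit $u$ then gives
\[
\EE\Bigl[\sum_{i,j}A_{ij}x_iy_j\Bigr]=\sqrt{2/\pi}\sum_{i,j}A_{ij}\langle v_i,u_j\rangle .
\]
Since $\|Ax\|_1\ge\sum_{i,j}A_{ij}x_iy_j$ for any sign vector $y$, the expected rounded objective is at least $\sqrt{2/\pi}\,\varepsilon\cdot\OPT$. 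The trace constraint~\eqref{sdp-rlx4} gives $\EE\|x\|^2\le n$. This is the content I expect Theorem~\ref{thm:round} to package, and I would simply invoke it.

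\textbf{The unbalance bound.} Let $(U^*,V^*)$ be the optimal solution of the nonconvex problem, with value $\B$, and set
\[
a_{ij}:=\|B_i+B_j\|_{V^*}^2=\|B_i\|^2_{V^*}+\|B_j\|^2_{V^*}+2\langle B_i,B_j\rangle_{V^*}, \qquad b_{ij}:=\|B_i\|_{V^*}\|B_j\|_{V^*}.
\]
By Lemma~\ref{lm:grot2}, the unbalance of the rounding has the form $\frac1{2n^2}\sum_{i,j}\bigl(1+\langle B_i,B_j\rangle_V/(\|B_i\|_V\|B_j\|_V)\bigr)$. Rewriting $1+\cos\theta_{ij}$ via the identity $2b_{ij}(1+\cos\theta_{ij})=a_{ij}-(\|B_i\|-\|B_j\|)^2$ relates each summand to $a_{ij}/b_{ij}$. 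The steps are as follows.
\begin{enumerate}
\item Take $\xi^*=\frac4n\sum_i\|B_i\|^2_{V^*}$, rounded up to the sweep grid at a $(1+\frac1n)$ loss, which is what Theorem~\ref{thm:polytime} controls. Then $(U^*,V^*)$ is feasible for $\mathrm{SDP}(\varepsilon,\xi^*)$, so its optimum $(U,V)$ satisfies
\[
\frac1{n^2}\sum_{i,j}\log a_{ij}(V)-\log\xi^*\ \ge\ \frac1{n^2}\sum_{i,j}\log a_{ij}(V^*)-\log\xi^*,
\]
that is, $\GM(a(V))\ge\GM(a(V^*))$.
\item Lower-bound the true unbalance of $V$ by Jensen/AM--GM. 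Use $\AM(a/b)\ge\GM(a)/\GM(b)$, then bound $\GM(b(V))$ above by $\AM$ of the $\|B_i\|_V^2$, which is at most $\xi^*/4$ by constraint~\eqref{sdp-rlx2}. This converts the certificate from step 1 into a lower bound on $\SDP$ in terms of $\GM(a(V^*))/\xi^*$.
\item Upper-bound $\B$ in the same currency. By Cauchy--Schwarz, $\B\lesssim\AM(a(V^*)/b(V^*))\le\AM(a^2)^{1/2}\,\AM(b^{-2})^{1/2}$. Using $\AM(b^{-2})=\HM(b^2)^{-1}$ and $\xi^*\sim\AM(b^2)^{1/2}$ (up to the factor $4$ that is cancelled by $\frac12(1+\cos)$), the ratio $\SDP/\B$ becomes exactly
\[
\frac{\GM(a^2)^{1/2}}{\AM(a^2)^{1/2}}\cdot\frac{\HM(b^2)^{1/2}}{\AM(b^2)^{1/2}},
\]
which is \eqref{eqn:mainthm}. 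Positive definiteness of $V^*$ and $a_{ij}>0$ keep every logarithm and harmonic mean finite.
\end{enumerate}

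\textbf{Main obstacle.} The hard part is step 2: the relaxed optimum $V$ controls only $\GM(a(V))$ and $\AM(\|B_i\|_V^2)$, while the true unbalance also needs the mixed term $\|B_i\|_V\|B_j\|_V$ bounded in the right direction. My first attempt would be to use $b_{ij}\le\frac12(\|B_i\|^2_V+\|B_j\|^2_V)$ together with the trace constraint to get $\AM(b(V))\le\xi^*/4$. Then AM--GM in the form $\frac1{n^2}\sum a/b\ge\GM(a)/\GM(b)\ge\GM(a)/\AM(b)$ should suffice, with every resulting mean ratio charged to $V^*$ as in step 3.
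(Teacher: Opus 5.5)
\paragraph{Gap in step 2.} Your step 2 fails, and it is the step you single out as the crux. You read $\SDP$ as the true unbalance of the relaxed optimum $V$ (Equation~\ref{unbal2:eqn} evaluated at $V$). You then try to bound it below through $\frac14\AM\bigl(a(V)/b(V)\bigr)$. Your own identity shows this goes the wrong way:
\[
\tfrac12\bigl(1+\cos\theta_{ij}\bigr)=\frac{a_{ij}}{4b_{ij}}-\frac{(\|B_i\|_V-\|B_j\|_V)^2}{4b_{ij}}\le\frac{a_{ij}}{4b_{ij}}.
\]
So $\frac14\AM(a/b)$ is an \emph{upper} bound on the unbalance of $V$. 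The subtracted term is not controlled by anything in $\mathrm{SDP}(\eps,\xi)$, which does nothing to equalize the $\|B_i\|_V$. For instance, if $\langle B_i,B_j\rangle_V=0$ and $\|B_i\|_V=t\|B_j\|_V$, then $1+\cos\theta_{ij}=1$ while $a_{ij}/b_{ij}=t+1/t$.

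The fix in your ``main obstacle'' paragraph is correct but beside the point. Lower-bounding $\AM(a/b)$ via $\GM(a)/\GM(b)$ and the trace constraint bounds from below a quantity that dominates the one you need. Step 3 works precisely because there you use the same inequality in its valid direction, $1+\cos\theta_{ij}\le a_{ij}/(2b_{ij})$, to bound $\B$ from above.

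\paragraph{How the paper avoids it.} The statement's wording invites your reading, but the paper's $\SDP$ (see the proof of Theorem~\ref{thm:main1}) is different. It is the value returned by Algorithm~\ref{algmain}: the exponentiated optimum $\GM(a(\widetilde V))/\widetilde\xi$ of the surrogate $\mathrm{SDP}(\eps,\xi)$. It is not Equation~\ref{unbal2:eqn} evaluated at $\widetilde V$.

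With that reading you should simply delete step 2. Step 1 gives $\SDP\ge\GM(a(V^*))/\xi^*$, up to the grid factor. Step 3 is exactly Lemma~\ref{lm:main}(2) at $X=V^*$: it uses $a_{ij}\ge 2b_{ij}(1+\cos\theta_{ij})$, Cauchy--Schwarz, $\AM(b^{-2})=\HM(b^2)^{-1}$, and $\frac1n\sum_i\|B_i\|_{V^*}^2=\AM(b^2)^{1/2}$. Steps 1 and 3 together are the paper's proof, and your constants are tracked more carefully than the paper's.

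The only bridge the paper offers from $\SDP$ to the actual unbalance of the output is Lemma~\ref{lm:main}(1). That boils down to two facts: Equation~\ref{unbal2:eqn} is always at least $\frac12$, and $\SDP\le 1$. So under your reading, the available argument gives $\B/(2\rho(V^*))$, not $\B/\rho(V^*)$.

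Your $\ell_1$ bullet matches Theorem~\ref{thm:round}. Two minor points, both shared with the paper: $\sum_{i,j}A_{ij}x_iy_j$ is bounded by $\|A^{T}x\|_1$ rather than $\|Ax\|_1$, and the rounded $x$ is unnormalized.
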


\paragraph{Proof structure} The proof of Theorem~\ref{main:thm} requires the following sub-theorems and lemmas.
Theorem~\ref{main:thm} contains two guarantees:
\begin{enumerate}
\item A guarantee on the unbalance factor as measured by the log-mutual-coherence.\footnote{The objective in Equation~\ref{unbal2:eqn} is essentially mutual coherence and $\mathrm{SDP}(\varepsilon,\xi)$ optimizes its logarithm.}
\item A guarantee on the expected $\ell_1$ objective of the rounded solution.
\end{enumerate}
The second follows from Theorem~\ref{thm:round}; the first from Theorem~\ref{thm:main1}, which relies on Lemma~\ref{lm:main}.

\begin{theorem}\label{thm:main1}
Let $\SDP$ be the solution from Algorithm~\ref{algmain}, and let $\rho(X):=\frac{\AM\left(\{\|B_i+B_j\|_{X}^4\}_{i,j\in[n]}\right)^\frac{1}{2}}{\GM\left(\{\|B_i+B_j\|_{X}^4\}_{i,j\in[n]}\right)^\frac{1}{2}}\cdot\frac{\AM\left(\{\|B_i\|_{X}^2\|B_j\|_{X}^2\}_{i,j\in[n]}\right)^\frac{1}{2}}{\HM\left(\{\|B_i\|_{X}^2\|B_j\|_{X}^2\}_{i,j\in[n]}\right)^\frac{1}{2}}$, then:
\begin{equation}
    \rho(V^*)\cdot\SDP\ge \B \ge \frac{\SDP}{2},
\end{equation}
where $V^*$ is the optimal solution in Theorem~\ref{main:thm}.
\end{theorem}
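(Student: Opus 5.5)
We read $\B$ as the optimum of the mutual-coherence problem (Equations~\ref{unbal2:eqn}--\ref{defmat:eqn}), attained at $(U^*,V^*)$. We read $\SDP$ as the largest value, over the $\xi$ grid of Algorithm~\ref{algmain}, of $\exp$ of the optimum of $\mathrm{SDP}(\varepsilon,\xi)$ (normalized as in its objective), evaluated at the returned $\widetilde V$. Write $c_{ij}(X)=\frac{1}{2}\bigl(1+\frac{\langle B_i,B_j\rangle_X}{\|B_i\|_X\|B_j\|_X}\bigr)$. The identity $\|B_i+B_j\|_X^2=\|B_i\|_X^2+\|B_j\|_X^2+2\langle B_i,B_j\rangle_X$ links the two objectives. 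The $\mathrm{SDP}(\varepsilon,\xi)$ objective is $\log\GM(\{\|B_i+B_j\|_V^2\})-\log\xi$, and constraint~\eqref{sdp-rlx2} says $\xi\ge\AM(\{\|B_i\|_V^2+\|B_j\|_V^2\})\cdot 2$. The plan is to compare the geometric-mean objective with the arithmetic-mean objective $\B=\AM(\{c_{ij}\})$ in both directions, using only AM--GM--HM inequalities.

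\textbf{Upper direction, $\SDP\le 2\B$.} Take the $V$ returned by the algorithm; it is feasible for the $\B$-problem, since constraints \eqref{sdp-rlx3}--\eqref{sdp-rlx6} coincide with \eqref{sdp-constr:eqn}--\eqref{defmat:eqn}. Write $\|B_i+B_j\|_V^2=4c_{ij}\,\|B_i\|_V\|B_j\|_V\cdot\frac{\|B_i\|_V^2+\|B_j\|_V^2+2\langle\cdot\rangle}{2\|B_i\|_V\|B_j\|_V+2\langle\cdot\rangle}$. More directly, $\|B_i+B_j\|_V^2\le 2c_{ij}(\|B_i\|_V^2+\|B_j\|_V^2)$ holds whenever the cosine is at least $-1$; check this by expanding and using $2\|B_i\|\|B_j\|\le\|B_i\|^2+\|B_j\|^2$ (this is exactly where the factor $2$ enters). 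Taking geometric means and then $\GM\le\AM$ gives
\[
\GM(\|B_i+B_j\|_V^2)\le 2\,\GM(c_{ij})\,\AM(\|B_i\|^2+\|B_j\|^2)\le 2\,\AM(c_{ij})\,\xi/2\cdot(\text{const}),
\]
and dividing by $\xi$ gives $\SDP\le 2\B$ once the normalizing constant in \eqref{sdp-rlx2} is matched. I will verify this constant bookkeeping carefully.

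\textbf{Lower direction, $\rho(V^*)\SDP\ge\B$.} Plug the optimal $V^*$, together with the grid value $\xi$ closest above $\frac{4}{n}\sum_i\|B_i\|_{V^*}^2$, into $\mathrm{SDP}(\varepsilon,\xi)$. This is feasible, and by the grid spacing it loses only a $(1+1/n)$ factor, which Theorem~\ref{thm:polytime} absorbs. So $\SDP\gtrsim \GM(\|B_i+B_j\|_{V^*}^2)/\AM(\|B_i\|^2+\|B_j\|^2)$. We then need to lower bound this by $\B/\rho(V^*)$. Write $\B=\AM(c_{ij})$ with $c_{ij}=\|B_i+B_j\|^2/(4\|B_i\|\|B_j\|)-\frac{(\|B_i\|-\|B_j\|)^2}{4\|B_i\|\|B_j\|}\le\|B_i+B_j\|^2/(4\|B_i\|\|B_j\|)$. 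Cauchy--Schwarz gives $\AM(c_{ij})\le\frac14\AM(\|B_i+B_j\|^4)^{1/2}\AM(\|B_i\|^{-2}\|B_j\|^{-2})^{1/2}$, and the second factor equals $\HM(\|B_i\|^2\|B_j\|^2)^{-1/2}$. Combining with $\GM(\|B_i+B_j\|^4)^{1/2}=\GM(\|B_i+B_j\|^2)$ and $\AM(\|B_i\|^2+\|B_j\|^2)\le 2\AM(\|B_i\|^2\|B_j\|^2)^{1/2}$ up to the same normalization yields exactly the two ratio factors defining $\rho(V^*)$.

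The main obstacle is the last step. The inequality relating $\AM(\|B_i\|^2+\|B_j\|^2)$ to $\AM(\|B_i\|^2\|B_j\|^2)^{1/2}$ goes the wrong way in general, since AM--GM gives $\|B_i\|^2+\|B_j\|^2\ge 2\|B_i\|\|B_j\|$. I would first try to rescale $V^*$. Both objectives are invariant under $V\mapsto tV$, which lets me normalize $\AM(\|B_i\|_{V^*}^2)$. If that fails, I would route the comparison through $\AM(\|B_i\|^2)^2=\AM_{ij}(\|B_i\|^2\|B_j\|^2)$, which is an exact identity for the product of marginals. This makes the denominators match without loss and is, I expect, the intended trick. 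The hypothesis $\|B_i+B_j\|_{V^*}>0$ is used only to make the geometric means and logarithms finite.
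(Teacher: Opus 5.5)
Your direction $\rho(V^*)\,\SDP\ge\B$ is sound and is the paper's argument: Steps 2--3 of Lemma~\ref{lm:main} (the bound $c_{ij}\le\|B_i+B_j\|^2/(4\|B_i\|\|B_j\|)$ and Cauchy--Schwarz), then feasibility of $V^*$ in $\mathrm{SDP}(\varepsilon,\xi)$. The ``obstacle'' you raise is not one. The relation $\AM(\{\|B_i\|_X^2+\|B_j\|_X^2\})=\frac2n\sum_i\|B_i\|_X^2=2\,\AM(\{\|B_i\|_X^2\|B_j\|_X^2\})^{1/2}$ holds with equality, which is the identity you end up proposing. The paper simply takes $\xi=\frac4n\sum_i\|B_i\|_{V^*}^2$ exactly. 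Your $(1+\frac1n)$ grid loss is a fair caveat that the stated inequality silently drops.

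The gap is in $\SDP\le 2\B$. Write $\kappa_{ij}:=\langle B_i,B_j\rangle_V/(\|B_i\|_V\|B_j\|_V)$, so that $c_{ij}=\frac12(1+\kappa_{ij})$. Expanding gives
\[
\|B_i+B_j\|_V^2-2c_{ij}\bigl(\|B_i\|_V^2+\|B_j\|_V^2\bigr)=-\kappa_{ij}\bigl(\|B_i\|_V-\|B_j\|_V\bigr)^2 .
\]
So your pointwise inequality holds only when $\kappa_{ij}\ge 0$ or the two norms agree, not whenever $\kappa_{ij}\ge -1$. Negative $V$-cosines are not excluded; the hypothesis $\|B_i+B_j\|_{V^*}>0$ does not rule them out. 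Nor is the failure a matter of constants. If $B_j=-2B_i$, then $c_{ij}=0$, so $\GM(\{c_{ij}\})=0$. Your chain would then assert $\GM(\{\|B_i+B_j\|_V^2\})\le 0$, although every factor is positive.

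The missing idea is that this direction is a comparison of ranges, not a termwise comparison. Because $V\succeq 0$,
\[
\sum_{i,j}\kappa_{ij}=\Bigl\|\sum_i\frac{B_i}{\|B_i\|_V}\Bigr\|_V^2\ge 0 ,
\]
so $\AM(\{c_{ij}\})\ge\frac12$ for every such $V$, and in particular $\B\ge\frac12$. Separately, the parallelogram bound $\|B_i+B_j\|_V^2\le 2(\|B_i\|_V^2+\|B_j\|_V^2)$ together with $\GM\le\AM$ gives $\GM(\{\|B_i+B_j\|_V^2\})\le\frac4n\sum_i\|B_i\|_V^2\le\xi$, that is, $\SDP\le 1$. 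Hence $\SDP\le 1\le 2\B$.

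This is what the paper extracts from part~1 of Lemma~\ref{lm:main}. At $\widetilde V$, the left side there is at least $2$ by the identity above and at most $4\B$ by feasibility. The geometric mean on the right is at most $2$, and at least $2\,\SDP$ after applying $\GM\le\AM$ in its denominator.
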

\noindent\textit{Proof: see Supplementary Section~\ref{sbs:det-proofs}.}

\begin{lemma}\label{lm:main}
Given an $n\times n$ positive definite matrix $X$ and vectors $\{u_i\}_{i\in[n]}$:
\begin{enumerate}
\item \begin{equation}
\frac{2}{n^2}\cdot\sum_{i,j\in[n]}\left(1+\frac{\langle u_i, u_j\rangle_X}{\|u_i\|_X\cdot\|u_j\|_X}\right)\ge\prod_{i,j\in[n]}\left(1+\frac{2\cdot\langle u_i, u_j\rangle_X}{\|u_i\|_X^2+\|u_j\|_X^2}\right)^{\frac{1}{n^2}}.\label{eqn:lb-lm}
\end{equation}
\item Furthermore, assume that $\forall i,j\in[n]:\ \|u_i+u_j\|_X>0$, and let $\rho(X):=\frac{\AM\left(\{\|B_i+B_j\|_{X}^4\}_{i,j\in[n]}\right)^\frac{1}{2}}{\GM\left(\{\|B_i+B_j\|_{X}^4\}_{i,j\in[n]}\right)^\frac{1}{2}}\cdot\frac{\AM\left(\{\|B_i\|_{X}^2\|B_j\|_{X}^2\}_{i,j\in[n]}\right)^\frac{1}{2}}{\HM\left(\{\|B_i\|_{X}^2\|B_j\|_{X}^2\}_{i,j\in[n]}\right)^\frac{1}{2}}$, then:
\begin{equation}
\rho(X)\cdot\prod_{i,j\in[n]}\left(1+\frac{2\cdot\langle u_i, u_j\rangle_X}{\|u_i\|_X^2+\|u_j\|_X^2}\right)^{\frac{1}{n^2}}\ge\frac{1}{n^2}\cdot\sum_{i,j\in[n]}\left(1+\frac{\langle u_i, u_j\rangle_X}{\|u_i\|_X\cdot\|u_j\|_X}\right).\label{eqn:ub-lm}
\end{equation}
\end{enumerate}
\end{lemma}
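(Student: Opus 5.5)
Throughout I identify $u_i$ with $B_i$, so that $\rho(X)$ is written in terms of the same vectors as the rest of the statement. I fix the notation $a_i:=\|u_i\|_X>0$, $c_{ij}:=\langle u_i,u_j\rangle_X$ and $N_{ij}:=\|u_i+u_j\|_X^2=a_i^2+a_j^2+2c_{ij}$. I also set $D_{ij}:=a_i^2+a_j^2$ and $P_{ij}:=a_i^2a_j^2$. The two families of quantities being compared are then
\[
s_{ij}:=1+\frac{2c_{ij}}{D_{ij}}=\frac{N_{ij}}{D_{ij}},\qquad t_{ij}:=1+\frac{c_{ij}}{a_ia_j}=\frac{N_{ij}-(a_i-a_j)^2}{2a_ia_j}.
\]
Both inequalities reduce to elementary mean inequalities once they are written in these variables.

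\textbf{Part 1.} I will not try a pointwise comparison of $t_{ij}$ and $s_{ij}$. It fails in general: for $u_j=-\lambda u_i$ with $\lambda\neq1$ we get $t_{ij}=0<s_{ij}$. Instead I use a global bound on each side separately. With $w_i:=u_i/a_i$ we have
\[
\sum_{i,j}\frac{c_{ij}}{a_ia_j}=\Big\|\sum_i w_i\Big\|_X^2\ge 0,
\]
since $X$ is positive definite. Hence $\frac{1}{n^2}\sum_{i,j}t_{ij}\ge 1$, and so the left-hand side of \eqref{eqn:lb-lm} is at least $2$. On the other side, Cauchy--Schwarz in the $X$-inner product together with AM--GM gives $2c_{ij}\le 2a_ia_j\le D_{ij}$. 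Therefore $s_{ij}\le 2$ for every pair, and the geometric mean of the $s_{ij}$ is at most $2$. This proves \eqref{eqn:lb-lm}.

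\textbf{Part 2.} The hypothesis $N_{ij}>0$ makes every $s_{ij}$ positive, so geometric means are well defined, and
\[
\GM(\{s_{ij}\})=\frac{\GM(\{N_{ij}\})}{\GM(\{D_{ij}\})}.
\]
For the right-hand side of \eqref{eqn:ub-lm} I drop the nonnegative term $(a_i-a_j)^2$, which gives $t_{ij}\le N_{ij}/(2a_ia_j)$. Cauchy--Schwarz over the $n^2$ pairs then yields
\[
\AM(\{t_{ij}\})\le \tfrac12\,\AM(\{N_{ij}^2\})^{1/2}\,\AM(\{P_{ij}^{-1}\})^{1/2}=\frac{\AM(\{N_{ij}^2\})^{1/2}}{2\,\HM(\{P_{ij}\})^{1/2}}.
\]
The left-hand side of \eqref{eqn:ub-lm} can be rewritten using $\GM(\{N_{ij}^4\})^{1/2}=\GM(\{N_{ij}\})^2$. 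The factors of $\GM(\{N_{ij}\})$ then combine, and the left-hand side equals
\[
\frac{\AM(\{N_{ij}^2\})^{1/2}\,\AM(\{P_{ij}\})^{1/2}}{\HM(\{P_{ij}\})^{1/2}\,\GM(\{N_{ij}\})\,\GM(\{D_{ij}\})}.
\]

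Comparing this with the upper bound on $\AM(\{t_{ij}\})$, it suffices to show
\[
\GM(\{N_{ij}\})\,\GM(\{D_{ij}\})\le 2\,\AM(\{P_{ij}\})^{1/2}.
\]
Since $P_{ij}=a_i^2a_j^2$ is a product over independent indices, $\AM(\{P_{ij}\})^{1/2}=\AM(\{a_i^2\})$. By AM--GM, $\GM(\{D_{ij}\})\le\AM(\{D_{ij}\})=2\AM(\{a_i^2\})$. So the residual inequality reduces to $\GM(\{N_{ij}\})\le 1$.

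This residual factor is the step I expect to be the main obstacle. The definition of $\rho$ is not invariant under rescaling $X$: $\|\cdot\|_X^4$ appears where the $\HM$, $\AM$ and $\GM$ factors do not balance. Because of this, the bound $\GM(\{N_{ij}\})\le 1$ will have to come from a normalization. I would first check the exact exponent convention the authors intend for $\GM(\{\|B_i+B_j\|_X^4\})^{1/2}$. If it is meant as $\GM(\{N_{ij}\})$ scaled so that $\GM(\{s_{ij}\})$ is compared directly, the $\GM(\{N_{ij}\})$ factors cancel outright and the inequality closes. Otherwise I would invoke the trace normalization \eqref{trconstr2:eqn} available for $X=V^*$ and track the scaling, or use the bound $N_{ij}\le 2D_{ij}$. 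I would record in the write-up whichever normalization makes the exponents consistent.
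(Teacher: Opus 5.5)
Your strategy matches the paper's. Part 1 is its Step 1: $\frac{1}{n^2}\sum_{i,j}t_{ij}=1+\frac{1}{n^2}\|\sum_i w_i\|_X^2\ge 1$, and each $s_{ij}\in[0,2]$. The bound $t_{ij}\le N_{ij}/(2a_ia_j)$ is its Step 2. Cauchy--Schwarz together with $\frac1n\sum_i a_i^2=\AM(\{P_{ij}\})^{1/2}$ is its Step 3.

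The gap is an exponent slip in evaluating $\rho(X)$, and it is what creates your ``main obstacle''. Since $N_{ij}=\|u_i+u_j\|_X^2$, the entries of $\rho$ are $\|u_i+u_j\|_X^4=N_{ij}^2$. You used this correctly for the $\AM$ factor, writing $\AM(\{N_{ij}^2\})^{1/2}$. For the $\GM$ factor, however, the value is $\GM(\{N_{ij}^2\})^{1/2}=\GM(\{N_{ij}\})$, not $\GM(\{N_{ij}\})^2$. There is no convention to check, and $\rho(X)$ is in fact invariant under $X\mapsto tX$.

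As written, your reduction ends at $\GM(\{N_{ij}\})\le 1$, which is false: replace $X$ by $tX$ with $t$ large. Neither fallback can rescue it:
\begin{itemize}
\item The lemma is stated for an arbitrary positive definite $X$, so no trace normalization is available.
\item The bound $N_{ij}\le 2D_{ij}$ only gives $\GM(\{N_{ij}\})\le 2\GM(\{D_{ij}\})$, which is just as scale-dependent.
\end{itemize}
Indeed, under your reading of $\rho$, the left side of the second inequality scales like $1/t$ while the right side is scale-free and at least $1$. The inequality would then be false, which signals that the error is in your bookkeeping, not in the statement.

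With the correct exponent, $\GM(\{N_{ij}\})$ cancels against the numerator of $\GM(\{s_{ij}\})=\GM(\{N_{ij}\})/\GM(\{D_{ij}\})$, and the left-hand side equals
\[
\frac{\AM(\{N_{ij}^2\})^{1/2}\,\AM(\{P_{ij}\})^{1/2}}{\HM(\{P_{ij}\})^{1/2}\,\GM(\{D_{ij}\})}.
\]
Compare this with your bound $\AM(\{t_{ij}\})\le\AM(\{N_{ij}^2\})^{1/2}/\bigl(2\HM(\{P_{ij}\})^{1/2}\bigr)$. All that remains is $\GM(\{D_{ij}\})\le 2\AM(\{P_{ij}\})^{1/2}=2\AM(\{a_i^2\})$, which is the AM--GM step $\GM(\{D_{ij}\})\le\AM(\{D_{ij}\})$ you already wrote down.

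That closes the proof, and it is exactly how the paper finishes. The paper uses $\GM(D)\le\AM(D)$ implicitly: it replaces $\GM(\{s_{ij}\})$ by $\GM(\{N_{ij}\})$ divided by $\frac4n\sum_i\|u_i\|_X^2$ in Step 3.
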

\noindent\textit{Proof: see Supplementary Section~\ref{sbs:det-proofs}.}

\begin{theorem}\label{thm:round}
The rounded solution from Algorithm~\ref{alground} has a constant approximation factor of $\sqrt{\frac{\pi}{2}}$ for the $\ell_1$ objective relative to the SDP solution.
\end{theorem}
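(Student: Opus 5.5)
I will prove Theorem~\ref{thm:round} by computing the expected $\ell_1$ objective of the rounded pair $(\{x_i\},\{y_i\})$ from Algorithm~\ref{alground} and comparing it with the SDP objective $\sum_{i,j}A_{ij}\langle v_i,u_j\rangle$, which is at least $\varepsilon\cdot\OPT$ by constraint~\eqref{sdp-constr:eqn}. The bilinear form of the unbalanced $2\to 1$ problem is $\sum_{i,j}A_{ij}x_iy_j$ with $y_j\in\{\pm1\}$; maximizing over $y$ gives $\|Ax\|_1$ for $x=(x_i)$. So it suffices to lower bound $\EE\bigl[\sum_{i,j}A_{ij}x_iy_j\bigr]$, since $\|Ax\|_1\ge\sum_{i,j}A_{ij}x_iy_j$ pointwise for any sign vector $y$.

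\textbf{Correlation identity.} The key step is a Grothendieck-type identity (the real/sign analogue of Lemma~\ref{lm:grot2}). Let $g$ be standard Gaussian, $v$ any vector and $u$ a unit vector. Then
\[
\EE\bigl[\langle v,g\rangle\,\step(\langle u,g\rangle)\bigr]=\sqrt{\tfrac{2}{\pi}}\,\langle v,u\rangle .
\]
To prove it, decompose $v=\langle v,u\rangle u+w$ with $w\perp u$. Then $\langle w,g\rangle$ is independent of $\langle u,g\rangle$ and has mean zero, so it contributes nothing. The remaining term is $\langle v,u\rangle\,\EE|\langle u,g\rangle|=\langle v,u\rangle\sqrt{2/\pi}$, using $\|u\|=1$ from constraint~\eqref{absconstr2:eqn}.

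\textbf{Assembling the bound.} By linearity,
\[
\EE\Bigl[\sum_{i,j}A_{ij}x_iy_j\Bigr]=\sqrt{\tfrac{2}{\pi}}\sum_{i,j}A_{ij}\langle v_i,u_j\rangle\ge\sqrt{\tfrac{2}{\pi}}\,\varepsilon\cdot\OPT .
\]
This is the constant $\sqrt{\pi/2}$ loss claimed.

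\textbf{Norm constraint.} I must also check that $x$ respects the norm constraint in expectation. We have $\EE\|x\|_2^2=\sum_i\|v_i\|^2\le n$ by~\eqref{trconstr2:eqn}, which matches the normalization $\frac1n\sum\|x_i\|^2\le1$. If a strict $\|x\|\le1$ bound is wanted, I would argue that $\|x\|^2$ concentrates near its mean, as a Gaussian quadratic form. I would then divide by $\max(1,\|x\|/\sqrt n)$ and apply Cauchy--Schwarz (or a Markov/conditioning argument) to lose only an absolute constant. The result is a bound of the form $\EE[\ldots]\ge c\cdot\sqrt{2/\pi}\,\varepsilon\OPT$, consistent with the $\Omega(\cdot)$ in Theorem~\ref{act-main:thm}.

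\textbf{Main obstacle.} The identity itself is routine. The delicate point is this normalization: the ratio $\sqrt{\pi/2}$ is exact only with respect to the averaged trace constraint, and converting to a worst-case norm bound costs an additional absolute constant. I would state the theorem in the expected-objective sense and handle the normalization separately.
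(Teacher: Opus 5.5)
Your argument is correct and is essentially the paper's proof. Both reduce to the identity $\EE[\langle v_i,g\rangle\,\step(\langle u_j,g\rangle)]=\sqrt{2/\pi}\,\langle v_i,u_j\rangle$ for unit $u_j$ and then sum over $i,j$ by linearity. The paper obtains the identity coordinate by coordinate, writing $\langle v_i,g\rangle=\sum_k v_i(k)g(k)$ and using $\EE[g(k)\,\step(\langle u_j,g\rangle)]=\sqrt{2/\pi}\,u_j(k)$. You instead project $v$ onto $u$ and its orthogonal complement, which is the same covariance computation.

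Your normalization paragraph goes beyond what the paper proves, since the paper ignores that issue. Two small corrections there. First, $\|x\|^2$ need not concentrate: if all $v_i$ are parallel it is a multiple of a $\chi^2_1$ variable. So the constant-factor loss should come from Markov/Cauchy--Schwarz combined with a second-moment bound for the Gaussian seminorm $\|A^{\top}x\|_1$, not from concentration. Second, with your indexing $\sum_{i,j}A_{ij}x_iy_j$, the maximum over $y$ is $\|A^{\top}x\|_1$, not $\|Ax\|_1$.
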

\noindent\textit{Proof: see Supplementary Section~\ref{sbs:det-proofs}.}

\noindent The $\sqrt{\frac{\pi}{2}}$ factor comes from calculations from Grothendieck's identity:
\begin{lemma}\label{lm:grot2}
Given a $2$-dimensional Gaussian $(X,Y)$ with mean zero and covariance matrix $\begin{bmatrix} \delta_x & \rho \\ \rho & \delta_y \end{bmatrix}$:
    \begin{equation}
        \mathbb{E}[\step(X)\step(Y)] = \frac{2}{\pi}\arcsin\left(\frac{\rho}{\sqrt{\delta_x\delta_y}}\right).
    \end{equation}
\end{lemma}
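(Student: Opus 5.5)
This is Sheppard's/Grothendieck's identity. I will reduce to the standard case of unit variances, handle that case by a rotational-symmetry (random hyperplane) argument, and then pass to arbitrary variances by scaling.

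\emph{Normalization.} I assume $\delta_x,\delta_y>0$, which is implicit in the formula. Set $X'=X/\sqrt{\delta_x}$ and $Y'=Y/\sqrt{\delta_y}$. Since $\step$ is invariant under multiplication by a positive scalar, $\step(X)\step(Y)=\step(X')\step(Y')$. The pair $(X',Y')$ is a centred Gaussian with unit variances and correlation $r=\rho/\sqrt{\delta_x\delta_y}\in[-1,1]$. It therefore suffices to show $\EE[\step(X')\step(Y')]=\frac{2}{\pi}\arcsin r$.

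\emph{Geometric representation.} Choose unit vectors $a,b\in\RR^2$ with $\langle a,b\rangle=r$, so the angle between them is $\theta=\arccos r\in[0,\pi]$. Let $g$ be a standard Gaussian in $\RR^2$. Then $(\langle a,g\rangle,\langle b,g\rangle)$ has exactly the law of $(X',Y')$. The direction $g/\|g\|$ is uniform on the circle by rotational invariance. The signs of $\langle a,g\rangle$ and $\langle b,g\rangle$ differ exactly when the line $g^\perp$ separates $a$ and $b$. That event has probability $\theta/\pi$, as in the Goemans--Williamson analysis~\cite{gw}. The event $\langle a,g\rangle=0$ has probability zero, so the convention $\step(0)=+1$ is irrelevant.

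\emph{Conclusion.} From the above,
\[
\EE[\step(X')\step(Y')]=1-\frac{2\theta}{\pi}=\frac{2}{\pi}\left(\frac{\pi}{2}-\arccos r\right)=\frac{2}{\pi}\arcsin r.
\]
This gives the claim. The degenerate cases $r=\pm1$ are covered, since there $X'=\pm Y'$ almost surely.

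The only step needing care is the probability $\theta/\pi$. It follows from uniformity of the angle of $g$: the separating directions of the line form two arcs, each of length $\theta$, out of a total of $2\pi$. As an alternative check, one could differentiate $\EE[\step(X')\step(Y')]$ in $r$ using the Gaussian density at the origin, which gives $\frac{2}{\pi\sqrt{1-r^2}}$. Integrating from $r=0$, where the expectation is $0$ by independence, recovers $\frac{2}{\pi}\arcsin r$.
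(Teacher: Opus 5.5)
Your proof is correct, but it takes a different route from the paper.

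\textbf{The paper's route.} It works directly with the given covariance. It writes $X/\sqrt{\delta_x}$ as $\frac{\rho}{\sqrt{\delta_x}\,\delta_y}Y$ plus a Gaussian residual $Z$ independent of $Y$. It then turns $\{XY\ge 0\}$ into a threshold event for the ratio of two independent standard Gaussians, uses that this ratio is Cauchy to obtain an $\arctan$, and converts via $\arctan\left(r/\sqrt{1-r^2}\right)=\arcsin r$.

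\textbf{Your route.} You normalize the variances and realize the pair as $(\langle a,g\rangle,\langle b,g\rangle)$ with unit vectors $a,b\in\RR^2$. You then read the answer off the uniform angle of $g$, i.e.\ the Goemans--Williamson separation probability $\theta/\pi$.

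\textbf{Comparison.} At bottom the two rest on the same fact: a ratio of independent standard Gaussians is Cauchy precisely because the angle of a planar standard Gaussian is uniform. Your version has three advantages:
\begin{itemize}
\item it avoids the regression algebra;
\item it never divides by $\sqrt{1-r^2}$, so the cases $|r|=1$ come for free;
\item it is exactly the form in which the identity is used in the rounding step $y_i=\step(\langle u_i,g\rangle)$.
\end{itemize}
The paper's calculation is more mechanical, but it is error-prone, and as written it contains several slips:
\begin{itemize}
\item the residual $Z$ has variance $1-\rho^2/(\delta_x\delta_y)$, not $1+\rho^2/(\delta_x\delta_y)$; only the former makes the final $\arctan$-to-$\arcsin$ identity hold;
\item the threshold should carry $-\rho$;
\item the Cauchy tail gives $\PR(XY\ge 0)=\frac12+\frac1\pi\arcsin r$, not $\frac1\pi\arcsin r$;
\item the final step $\EE[\step(X)\step(Y)]=2\PR(XY\ge 0)-1$ is left implicit.
\end{itemize}
Your argument avoids all of these.
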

\noindent\textit{Proof: see Supplementary Section~\ref{sbs:det-proofs}.}

\clearpage
\subsection{Supplementary Detailed Proofs}\label{sbs:det-proofs}

\subsubsection*{Proof of Theorem~\ref{thm:var-char} (variational characterization)}\label{sec:proof-varchar}
\begin{proof}
We asymptotically evaluate $\Psi(m)$. Since $\Psi$ is a sum of exponentials, it is dominated by the largest term (Laplace's principle). To obtain simplified expressions in terms of the $\ell_1$ and $\ell_2$ norms, we eliminate lower-order terms by taking limits.

We write $\exp\{\Psi\}$ and simplify, tracking linear boundary terms (we eventually let $B\to 0$ and differentiate to obtain the magnetization). Let $H(x)$ denote the Hamiltonian and $H$ the $m\times n$ matrix with column $H_{i}\equiv (h_{1i},...,h_{mi})$, so that $H(x)\equiv xHH^Tx$. Let $Z$ denote a standard $m$-dimensional Gaussian:
\begin{eqnarray}
    \exp\{\Psi\}&:=&\sum_{x\in\{\pm1\}^n}\exp\left(\frac{\beta}{n}H(x)+B\left(\sum_{i\in[n]}x_i\right)\right)\\
    &=&\sum_{x\in\{\pm1\}^n}\EE\left[\exp\left(\frac{\beta}{\sqrt{n}}xHZ\right)\right]\exp\left(\beta B\left(\sum_{i\in[n]}x_i\right)\right)\\
    &=&2^n\EE\left[\beta \prod_{i=1}^{n}\cosh{\left(\sum_{j=1}^mH_{ji}Z_j+B\right)}\right]\\
    &=&2^n\EE\left[\exp\left(\beta\sum_{i=1}^{n}\log\cosh{\left(\sum_{j=1}^mH_{ji}Z_j+B\right)}\right)\right]
\end{eqnarray}
To evaluate the sum in the exponential, group the $n$ terms with the same tuple $H_i:=(A_{i1},...,A_{im})$; there are approximately $n_i=w\cdot n$ such terms, where $w=\frac{m}{n}\pm o(1)$.

By the law of large numbers, each group converges to its mean. For group $k\in[m]$ with $n_k$ terms:
\begin{eqnarray}
    S_k&:=&\sum_{k\in K}\log\cosh\left(\sum_{j=1}^{m}A_{kj}Z_j+B\right)\\
    &\simeq&n_k\int_\RR\log\cosh\left(\sum_{j=1}^{m}A_{kj}Z_j+B\right)\mathrm{d}p_K(\vec{H}_k),
\end{eqnarray}
where $\vec{L}_k$ identifies group $k$. Substituting $Z_j=\frac{n}{n_k}\zeta_j$ and $B=\frac{n}{n_k}B_k$:
\begin{eqnarray}
    S_k&\simeq&n_k\cdot\log\cosh\left(\frac{n}{n_k}\left(\sum_{j=1}^{m}A_{kj}\zeta_j+B_k\right)\right)\\
    &\simeq&n\cdot\frac{\log\cosh\left(\frac{n}{n_K}\left(\sum_{j=1}^{m}A_{kj}\zeta_j+B_k\right)\right)}{\frac{n}{n_k}\left(\sum_{j=1}^{m}A_{kj}\zeta_j+B_k\right)}\left(\sum_{j=1}^{m}A_{kj}\zeta_j+B_k\right)
\end{eqnarray}
For $0\ll n_K\ll n$, L'H\^{o}pital's rule gives:
\begin{equation}\label{eqn:detcase}
    S_k=n\cdot\left|\sum_{j=1}^{m}A_{kj}\zeta_j+B_k\right|.
\end{equation}

For large $m$, $n$, $\frac{m}{n}\to 0$ and $B\to 0^+$, Laplace's principle yields:
\begin{equation}
     \Psi \to \max_{\zeta\in\RR^m}\left(\beta\sum_{k\in[m]}\left|\sum_{j\in[m]}A_{kj}\zeta_j\right|-\frac{1}{2}\cdot\zeta^T\zeta\right).\label{eqn:interim-max}
\end{equation}
This is equivalent to computing the $2\to 1$ norm with constraint $\|\zeta\|\le y$.

The magnetization equals the average partial derivative of $\Psi(m,B)$ with respect to $B_k$ (as $B_k\to 0^+$). From Equation~\ref{eqn:detcase}:
\begin{equation}
    \Phi = \frac{1}{m}\sum_{k\in[m]}\step\left(\beta\sum_{j\in[m]}A_{kj}\zeta_j^*\right)=\frac{1}{m}\sum_{k\in[m]}\step\left(\sum_{j\in[m]}A_{kj}\zeta_j^*\right).
\end{equation}
\end{proof}

\subsubsection*{Proof of Theorem~\ref{main:thm} (unbalance factor bounds)}\label{sec:proof-main}
\begin{proof}
Let $\widetilde{V}$ denote the optimal solution from Algorithm~\ref{algmain}. Setting $X = \widetilde{V}$ in Equation~\ref{eqn:lb-lm}:
\begin{equation}
    \frac{2}{n^2}\cdot\sum_{i,j\in[n]}\left(1+\frac{\langle u_i, u_j\rangle_{\widetilde{V}}}{\|u_i\|_{\widetilde{V}}\cdot\|u_j\|_{\widetilde{V}}}\right)\ge\prod_{i,j\in[n]}\left(1+\frac{2\cdot\langle u_i, u_j\rangle_{\widetilde{V}}}{\|u_i\|_{\widetilde{V}}^2+\|u_j\|_{\widetilde{V}}^2}\right)^{\frac{1}{n^2}}.\label{eqn:step1}
\end{equation}
The RHS is at least $2\cdot\SDP$ (by AM-GM on the denominator); the LHS is at most $4\cdot\B$. Hence $2\cdot\B\ge\SDP$.

For the upper bound, set $X=V^*$ in Equation~\ref{eqn:ub-lm}:
\begin{equation}
\rho(V^*)\cdot\frac{\prod_{i,j\in[n]}\left(\|B_i+B_j\|_{V^*}^2\right)^\frac{1}{n^2}}{\frac{4}{n}\cdot\sum_{i\in[n]}\|B_i\|_{V^*}^2}\ge\frac{1}{n^2}\cdot\sum_{i,j\in[n]}\left(1+\frac{\langle u_i, u_j\rangle_{V^*}}{\|u_i\|_{V^*}\cdot\|u_j\|_{V^*}}\right).\label{eqn:ub-step2}
\end{equation}
Since Algorithm~\ref{algmain}'s optimum ($\SDP$) dominates the geometric mean:
\begin{equation}
\frac{\prod_{i,j\in[n]}\left(\|B_i+B_j\|_{\widetilde{V}}^2\right)^\frac{1}{n^2}}{\frac{4}{n}\cdot\sum_{i\in[n]}\|B_i\|_{\widetilde{V}}^2}
\ge\frac{\prod_{i,j\in[n]}\left(\|B_i+B_j\|_{V^*}^2\right)^\frac{1}{n^2}}{\frac{4}{n}\cdot\sum_{i\in[n]}\|B_i\|_{V^*}^2},\label{eqn:ub-step3}
\end{equation}
and substituting into Equation~\ref{eqn:ub-step2} gives the upper bound on $\B$.
\end{proof}

\subsubsection*{Proof of Theorem~\ref{thm:polytime} (polynomial time)}\label{sec:proof-polytime}
\begin{proof}
$\mathrm{SDP}(\varepsilon,\xi)$ is feasible for $\varepsilon < 1$ and $\xi\ge\frac{4}{n}\cdot\sum_{i}\|B_i\|_{V^*}^2$. Since the maximum eigenvalue of $V^*$ is at most $n$ and $\|B_i\|\le n\cdot\max_{i,j}|B_{i,j}|$, we have $\frac{1}{n}\cdot\sum_{i}\|B_i\|_{V^*}^2\le n^3\cdot (\max_{i,j}|B_{i,j}|)^2$. Hence Algorithm~\ref{algmain} encounters at least one feasible $\xi$.

If $\xi^*\in(h,n^3\cdot(\max_{i,j}|B_{ij}|)^2)$, then any returned $\widetilde{\xi}$ is within a $(1+\frac{1}{n})$ factor of $\xi^*$. If $0<\xi^*<h$, then $\frac{1}{n}\cdot\sum_{i}\|B_i\|_{V^*}^2 < \xi^*$, implying some $\|B_i\|_{V^*}^2\le\xi^*$, so the minimum eigenvalue of $V^*$ is less than $h$, a contradiction. Algorithm~\ref{algmain} makes polynomially many calls to Algorithm~\ref{alground}, which runs in polynomial time.
\end{proof}

\subsubsection*{Proof of Theorem~\ref{thm:round} ($\ell_1$ approximation guarantee)}
\begin{proof}
We compute $\EE\left[\langle v_i,g\rangle\cdot\step(\langle u_j,g\rangle)\right]$ in terms of $\langle v_i, u_j\rangle$, using the fact that for Gaussians $\gamma_1,\gamma_2$ with variances $\sigma_1,\sigma_2$ and correlation $\varrho$: $\EE[\gamma_1\cdot\step(\gamma_2)] =\varrho\sigma_1\sigma_2\sqrt{\frac{2}{\pi}}$.\footnote{This analysis parallels the proof via Reitz' method in~\cite{alon-naor}.}

For $g=(g(1),...,g(n))$ in Algorithm~\ref{alground}, define for each $k\in[n]$:
\begin{equation}
h_{j}(k):=u_j(k)g(k)+g_k^\perp,\label{eqn:gijk}
\end{equation}
where $g_k^\perp$ is uncorrelated with $g(k)$ and $\langle u_j,g\rangle=h_j$. Then:
\begin{eqnarray}
\EE\left[\langle v_i,g\rangle\cdot\step(\langle u_j,g\rangle)\right] &=& \EE[\langle v_i,g\rangle\cdot\step(h_j)]\\
&=& \EE\left[\sum_{k\in[n]} v_i(k)g(k)\cdot\step(h_j(k))\right]\\
&=& \sqrt{\frac{2}{\pi}}\cdot\sum_{k\in[n]} v_i(k)u_j(k)\\
&=& \sqrt{\frac{2}{\pi}}\cdot\langle v_i,u_j\rangle.
\end{eqnarray}
Summing over all $i,j\in[n]$ gives the $\ell_1$ objective in terms of the SDP solution.
\end{proof}

\subsubsection*{Proof of Lemma~\ref{lm:grot2} (Grothendieck's identity)}
\begin{proof}
Let $Z:=\frac{X}{\sqrt{\delta_x}}-\frac{\rho Y}{\sqrt{\delta_x}\delta_y}$, a mean-zero Gaussian independent of $Y$ with variance $1+\frac{\rho^2}{\delta_x\delta_y}$. Then:
\begin{eqnarray}
    \PR\left(XY\ge 0\right) &=& \PR\left(ZY\ge \frac{\rho}{\sqrt{\delta_x}\delta_y}Y^2\right)\\
    &=& \PR\left(\frac{Z}{\sqrt{1+\frac{\rho^2}{\delta_x\delta_y}}}\ge\frac{\rho Y}{\sqrt{\delta_x}\delta_y\sqrt{1+\frac{\rho^2}{\delta_x\delta_y}}}\right)\\
    &=&\PR\left(\frac{\tilde{Z}}{\tilde{Y}}\ge\frac{\rho}{\sqrt{\delta_x\delta_y}\sqrt{1+\frac{\rho^2}{\delta_x\delta_y}}}\right),
\end{eqnarray}
where $\tilde{Y},\tilde{Z}$ are independent standard Gaussians, so their ratio follows the Cauchy distribution:
\begin{eqnarray}
    \PR\left(XY\ge 0\right) &=& \frac{1}{\pi}\arctan\left(\frac{\rho}{\sqrt{\delta_x\delta_y}\sqrt{1+\frac{\rho^2}{\delta_x\delta_y}}}\right)\\
    &=& \frac{1}{\pi}\arcsin\left(\frac{\rho}{\sqrt{\delta_x\delta_y}}\right).
\end{eqnarray}
\end{proof}

\subsubsection*{Proof of Lemma~\ref{lm:main} (core inequality)}
\begin{proof}
The proof proceeds in three steps.

\textbf{Step 1.} Show:
    \begin{equation}
      \frac{1}{n^2}\cdot\sum_{i,j\in[n]}\left(1+\frac{\langle u_i, u_j\rangle_X}{\|u_i\|_X\cdot\|u_j\|_X}\right)\ge\frac{1}{2n^2}\cdot\sum_{i,j\in[n]}\left(1+\frac{2\cdot\langle u_i, u_j\rangle_X}{\|u_i\|_X^2+\|u_j\|_X^2}\right).\label{eqn:lb1}
    \end{equation}

The RHS of~\eqref{eqn:lb1} is at most $1$ (each summand is in $[0,2]$). The LHS can be rewritten as:
\begin{eqnarray}
2 + \frac{2}{n^2}\cdot\sum_{i,j\in[n]}\frac{\langle u_i, u_j\rangle_X}{\|u_i\|_X\cdot\|u_j\|_X} &=& 2+ \frac{2}{n^2}\cdot\sum_{i,j\in[n]}\langle \frac{u_i}{\|u_i\|_X}, \frac{u_j}{\|u_j\|_X}\rangle_X\\
&=& 2 + \frac{2}{n^2}\cdot\langle \sum_{i\in[n]}\frac{u_i}{\|u_i\|_X}, \sum_{j\in[n]}\frac{u_j}{\|u_j\|_X}\rangle_X\\
&\ge& 2,
\end{eqnarray}
by non-negativity of the norm under positive semidefinite $X$. The AM-GM inequality on the RHS of~\eqref{eqn:lb1} yields~\eqref{eqn:lb-lm}.

\textbf{Step 2.} Show:
    \begin{equation}
        \frac{1}{n^2}\cdot\sum_{i,j\in[n]}\frac{\|u_i+u_j\|_X^2}{\|u_i\|_X\cdot\|u_j\|_X}\ge \frac{2}{n^2}\cdot\sum_{i,j\in[n]}\left(1+\frac{\langle u_i, u_j\rangle_X}{\|u_i\|_X\cdot\|u_j\|_X}\right).\nonumber
    \end{equation}
This follows from:
\begin{equation}
\|u_i+u_j\|_X^2 \ge 2\cdot\left(\frac{\|u_i\|_X^2+\|u_j\|_X^2}{2}+\langle u_i,u_j\rangle_X\right)\ge 2\cdot\left(\|u_i\|_X\cdot\|u_j\|_X+\langle u_i,u_j\rangle_X\right),
\end{equation}
where the last inequality is AM-GM.

\textbf{Step 3.} Show:
    \begin{equation}
       \rho(X)\cdot\frac{\prod_{i,j\in[n]}\left(\|u_i+u_j\|_{X}^2\right)^\frac{1}{n^2}}{\frac{4}{n}\cdot\sum_{i\in[n]}\|u_i\|_{X}^2}\ge \frac{1}{4n^2}\cdot\sum_{i,j\in[n]}\frac{\|u_i+u_j\|_X^2}{\|u_i\|_X\cdot\|u_j\|_X}.\nonumber
    \end{equation}

We need:
\begin{equation}
    \rho(X) \ge \frac{\frac{1}{4n^2}\cdot\sum_{i,j\in[n]}\frac{\|u_i+u_j\|_X^2}{\|u_i\|_X\cdot\|u_j\|_X}}{\frac{\prod_{i,j\in[n]}\left(\|u_i+u_j\|_{X}^2\right)^\frac{1}{n^2}}{\frac{4}{n}\cdot\sum_{i\in[n]}\|u_i\|_{X}^2}}.\label{step3:eqn}
\end{equation}
By Cauchy--Schwarz:
\begin{eqnarray}
    \frac{\frac{1}{4n^2}\cdot\sum_{i,j}\frac{\|u_i+u_j\|_X^2}{\|u_i\|_X\cdot\|u_j\|_X}}{\frac{\prod_{i,j}\left(\|u_i+u_j\|_{X}^2\right)^\frac{1}{n^2}}{\frac{4}{n}\cdot\sum_{i}\|u_i\|_{X}^2}}
    \le\frac{\left(\frac{1}{n^2}\cdot\sum_{i,j}\|u_i+u_j\|_X^4\right)^\frac{1}{2}\left(\frac{1}{n^2}\cdot\sum_{i,j}\frac{1}{\|u_i\|_X^2\cdot\|u_j\|_X^2}\right)^\frac{1}{2}}{\prod_{i,j}\left(\|u_i\|_X+\|u_j\|_X\right)^{\frac{2}{n^2}}}\cdot\frac{1}{n}\cdot\sum_{i}\|u_i\|_{X}^2.\nonumber
\end{eqnarray}
Using:
\begin{equation}
    \frac{1}{n}\cdot\sum_{i\in[n]}\|u_i\|_X^2 = \left(\frac{1}{n^2}\cdot\sum_{i,j\in[n]}\|u_i\|_X^2\|u_j\|_X^2\right)^\frac{1}{2},\label{am:eqn}
\end{equation}
the upper bound for the RHS of~\eqref{step3:eqn} becomes:
\begin{equation}
     \frac{\left(\frac{1}{n^2}\cdot\sum_{i,j}\|u_i+u_j\|_X^4\right)^\frac{1}{2}}{\prod_{i,j}\left(\|u_i\|_X+\|u_j\|_X\right)^{\frac{2}{n^2}}}\cdot\frac{ \left(\frac{1}{n^2}\cdot\sum_{i,j}\|u_i\|_X^2\|u_j\|_X^2\right)^\frac{1}{2}}{\left(\frac{n^2}{\sum_{i,j}\frac{1}{\|u_i\|_X^2\cdot\|u_j\|_X^2}}\right)^{\frac{1}{2}}}.\label{fin:eqn}
\end{equation}
Expression~\eqref{fin:eqn} equals $\rho(X)$. Since $X$ is positive definite and $\|u_i+u_j\|_X > 0$, the denominators are positive.
\end{proof}

\clearpage
\subsection{Supplementary Algorithmic Validation}\label{sec:supp-algval}

We validated Algorithm~\ref{algmain} on synthetic $\ell_1$-PCA instances with $B = I_{10}$ and $A$ drawn from the $10 \times 10$ Gaussian Orthogonal Ensemble. The unconstrained SDP baseline produces approximately balanced solutions (Supplementary Fig.~\ref{fig:l1pca_quad}a). Introducing the conditioning constraint at $\varepsilon = 0.5$ shifts the solutions toward greater unbalance (Supplementary Fig.~\ref{fig:l1pca_quad}b). The objective ratio varies linearly with $\varepsilon$ (Supplementary Fig.~\ref{fig:l1pca_quad}c), confirming that the SDP constraint remains tight, and the relative unbalance factor approaches its theoretical maximum of $2$ at small $\varepsilon$ (Supplementary Fig.~\ref{fig:l1pca_quad}d).

\begin{figure}[htbp]
  \centering
  \begin{subfigure}[b]{0.45\linewidth}
    \centering
    \includegraphics[width=.85\linewidth]{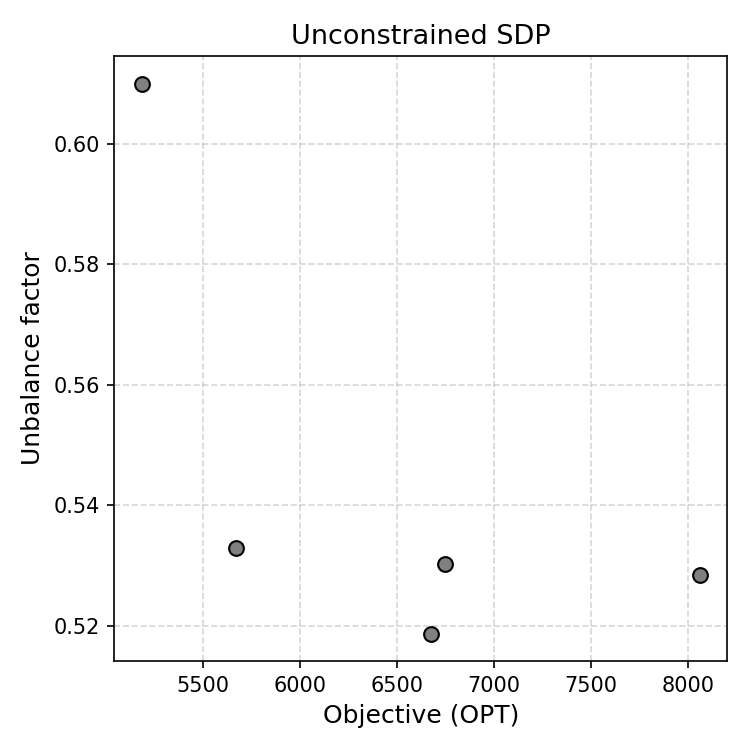}
    \caption{{\footnotesize Unconstrained SDP: objective vs.\ unbalance factor.}}\label{fig:l1pca1}
  \end{subfigure}\hfill
  \begin{subfigure}[b]{0.45\linewidth}
    \centering
    \includegraphics[width=.86\linewidth]{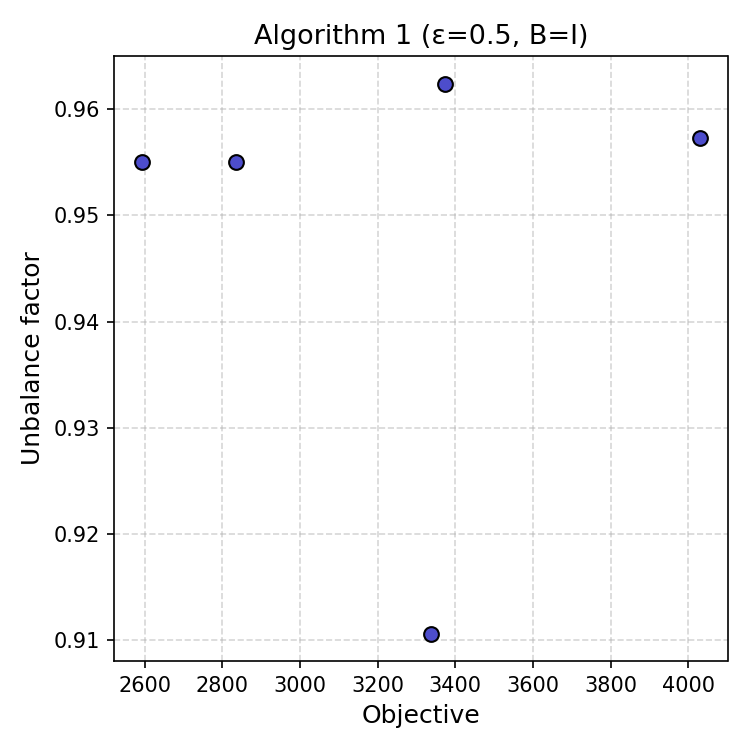}
    \caption{{\footnotesize Algorithm~\ref{algmain} with $\varepsilon=0.5$: objective vs.\ unbalance factor.}}\label{fig:l1pca2}
  \end{subfigure}

  \vspace{1ex}

  \begin{subfigure}[b]{0.45\linewidth}
    \centering
    \includegraphics[width=.85\linewidth]{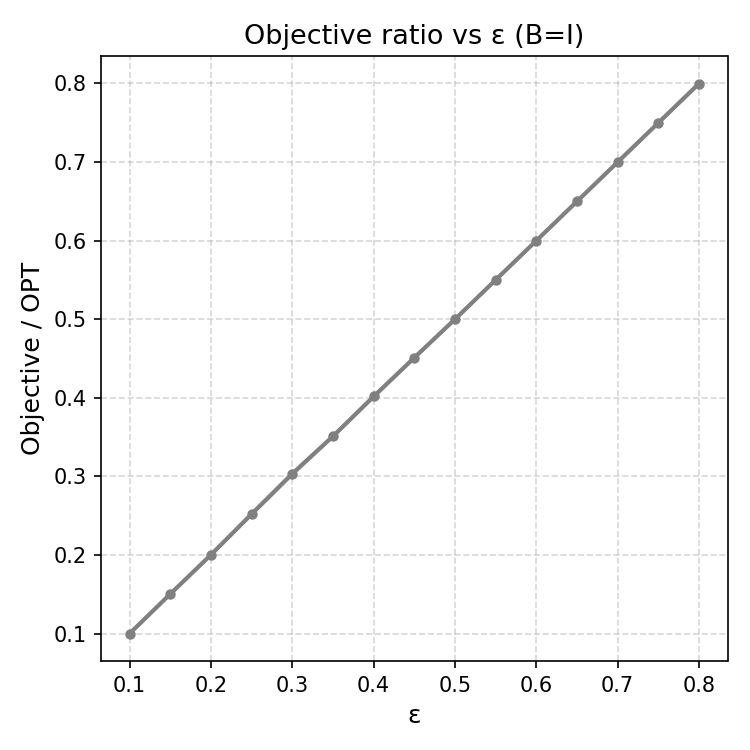}
    \caption{{\footnotesize Objective vs.\ $\varepsilon$.}}\label{fig:l1pca3}
  \end{subfigure}\hfill
  \begin{subfigure}[b]{0.45\linewidth}
    \centering
    \includegraphics[width=1\linewidth]{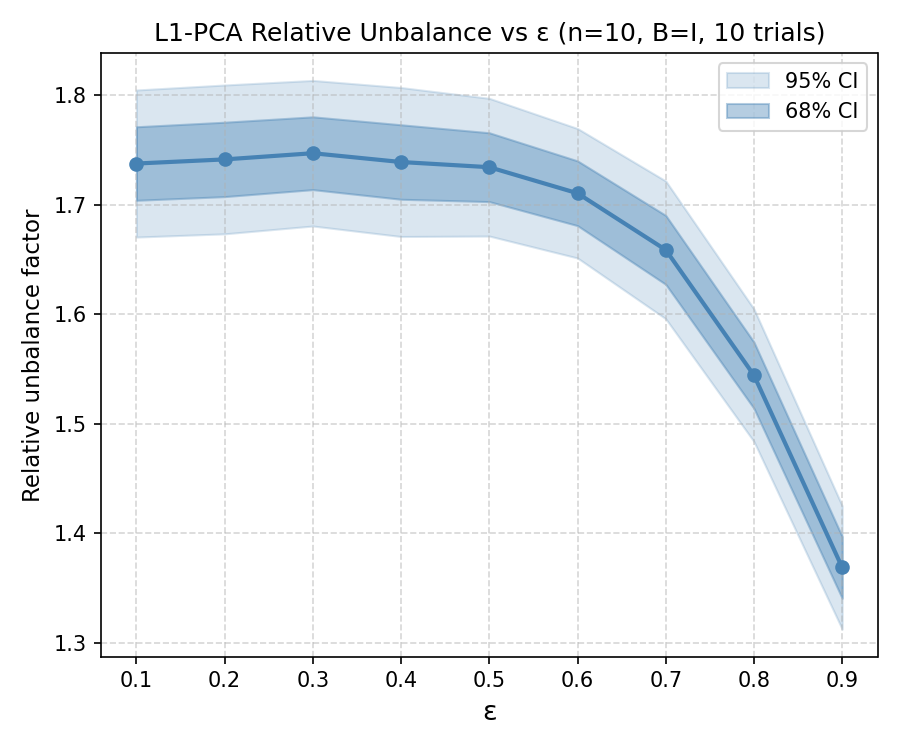}
    \caption{{\footnotesize Relative unbalance factor vs.\ $\varepsilon$.}}\label{fig:l1pca4}
  \end{subfigure}
  \caption{{\footnotesize Synthetic $\ell_1$-PCA validation of Algorithm~\ref{algmain} with $B=I_{10}$ and $A$ drawn from the $10\times 10$ GOE. (a)~Unconstrained baseline is approximately balanced. (b)~$\varepsilon=0.5$ yields more unbalanced solutions. (c)~Objective ratio is approximately linear in $\varepsilon$. (d)~Relative unbalance is nearly $2$ (its maximum) for small $\eps$.}}\label{fig:l1pca_quad}
\end{figure}

\subsection*{Supplementary Pipeline Validation}\label{sec:supp-validation}

Supplementary Fig.~\ref{fig:pdb-ham} shows the Hamiltonian matrix constructed from 150 PDB-derived J-coupling constants (75 diagonal entries from ${}^3J(\phi)$, 75 off-diagonal entries from ${}^2J(\psi)$).

\begin{figure}[htbp]
\centering
\includegraphics[width=0.45\textwidth]{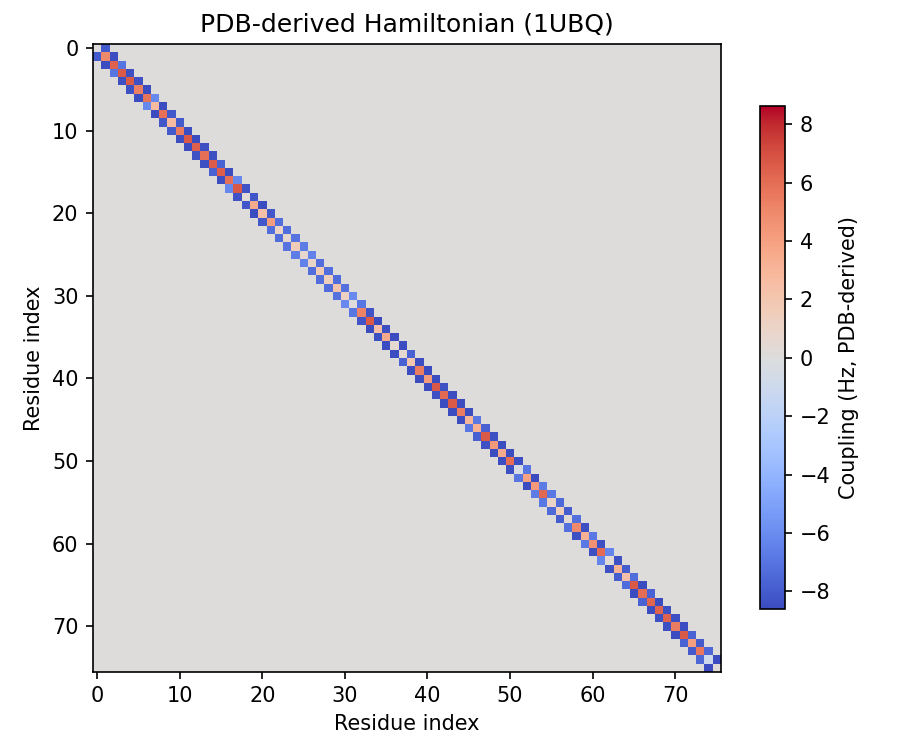}
\caption{Hamiltonian matrix for Ubiquitin, constructed from PDB-derived J-coupling constants. Intra-residue couplings (from $\phi$) contribute to the diagonal; sequential couplings (from $\psi$) contribute to the off-diagonal.}
\label{fig:pdb-ham}
\end{figure}

Supplementary Fig.~\ref{fig:pdb-rama-baseline} shows the Ramachandran plot of the 1UBQ crystal structure, serving as the ground-truth baseline for all conditioned structures in Section~\ref{sec:application}.

\begin{figure}[htbp]
\centering
\includegraphics[width=0.55\textwidth]{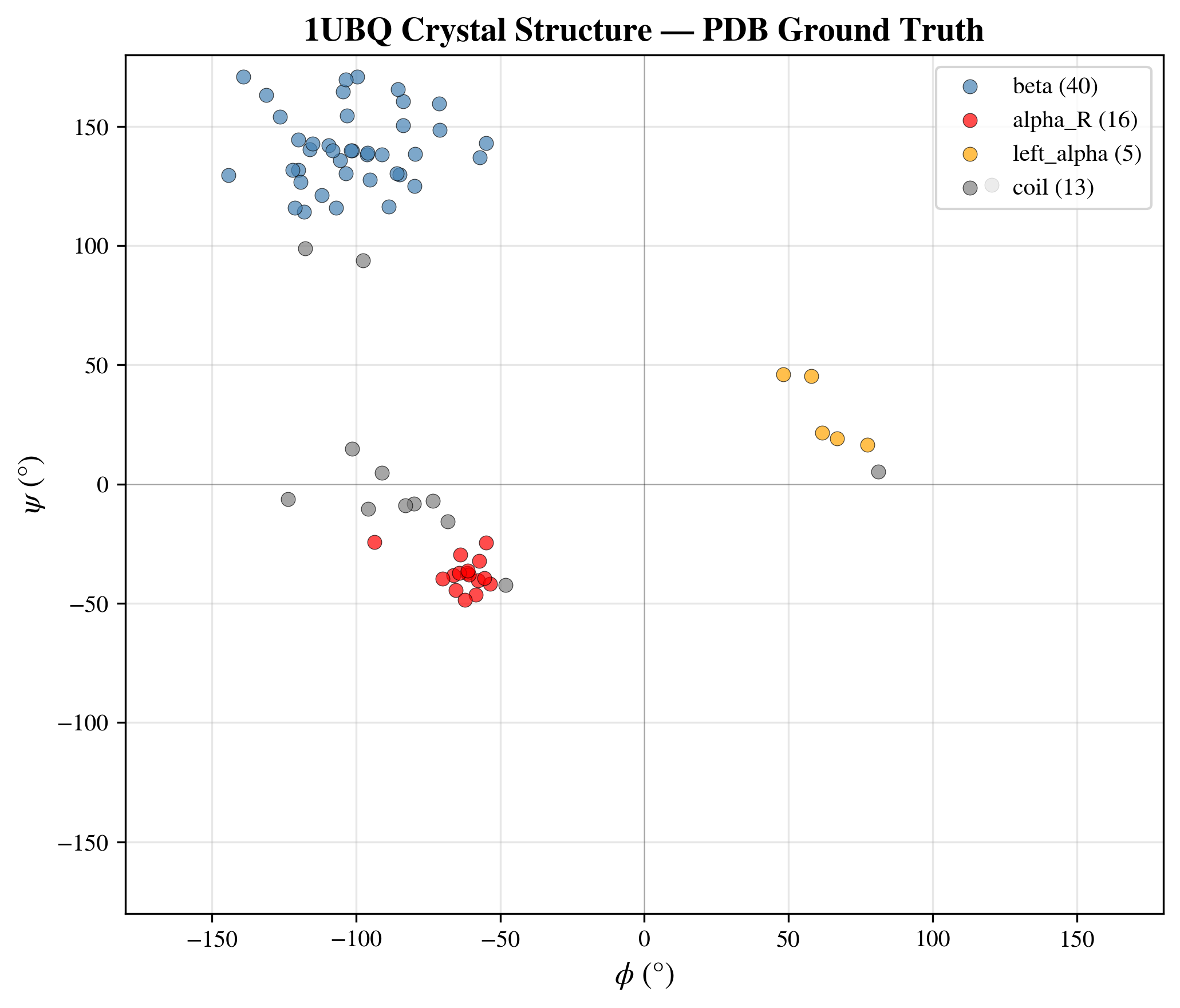}
\caption{Ramachandran plot of the 1UBQ crystal structure (PDB ground truth). The 74 backbone residues partition into $\beta$-sheet (blue, 40), $\alpha_R$-helix (red, 16), left-handed $\alpha$ (orange, 5), and coil (gray, 13).}
\label{fig:pdb-rama-baseline}
\end{figure}

To verify the pipeline's self-consistency, we performed a round-trip test: forward Karplus $\to$ Hamiltonian $\to$ SDP ($\varepsilon = 0.9$) $\to$ Karplus$^{-1}$. The recovered $\phi$ angles have a maximum error of $0.05^\circ$ (mean $0.03^\circ$), and $\psi$ angles are recovered exactly (max error $< 10^{-10}$ degrees). Supplementary Fig.~\ref{fig:roundtrip} shows the recovered vs.\ PDB angles and the error distributions.

\begin{figure}[htb!]
\centering
\includegraphics[width=0.95\textwidth]{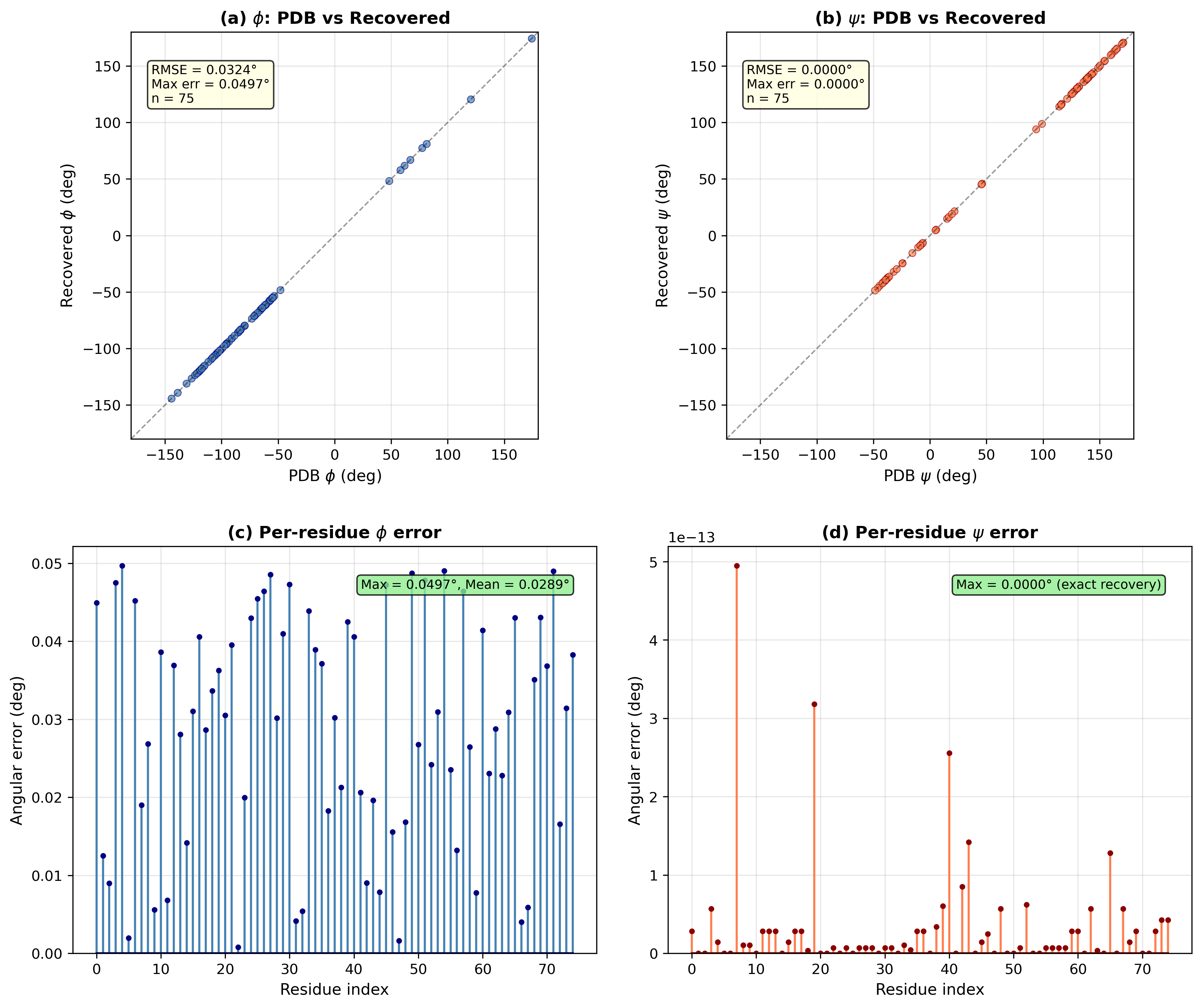}
\caption{Round-trip pipeline validation: forward Karplus $\to$ Hamiltonian $\to$ SDP$(\varepsilon{=}0.9)$ $\to$ Karplus$^{-1}$. (a)~Recovered $\phi$ vs.\ PDB $\phi$ (RMSE $= 0.03^\circ$). (b)~Recovered $\psi$ vs.\ PDB $\psi$ (exact recovery). (c)~Per-residue $\phi$ error (max $0.05^\circ$). (d)~Per-residue $\psi$ error (all below $10^{-10}$ degrees).}
\label{fig:roundtrip}
\end{figure}

Supplementary Fig.~\ref{fig:rmsd-vs-eps} shows the angular RMSD between SDP-recovered dihedral angles and the crystal structure across the $\varepsilon$ sweep.

\begin{figure}[htbp]
\centering
\includegraphics[width=0.6\textwidth]{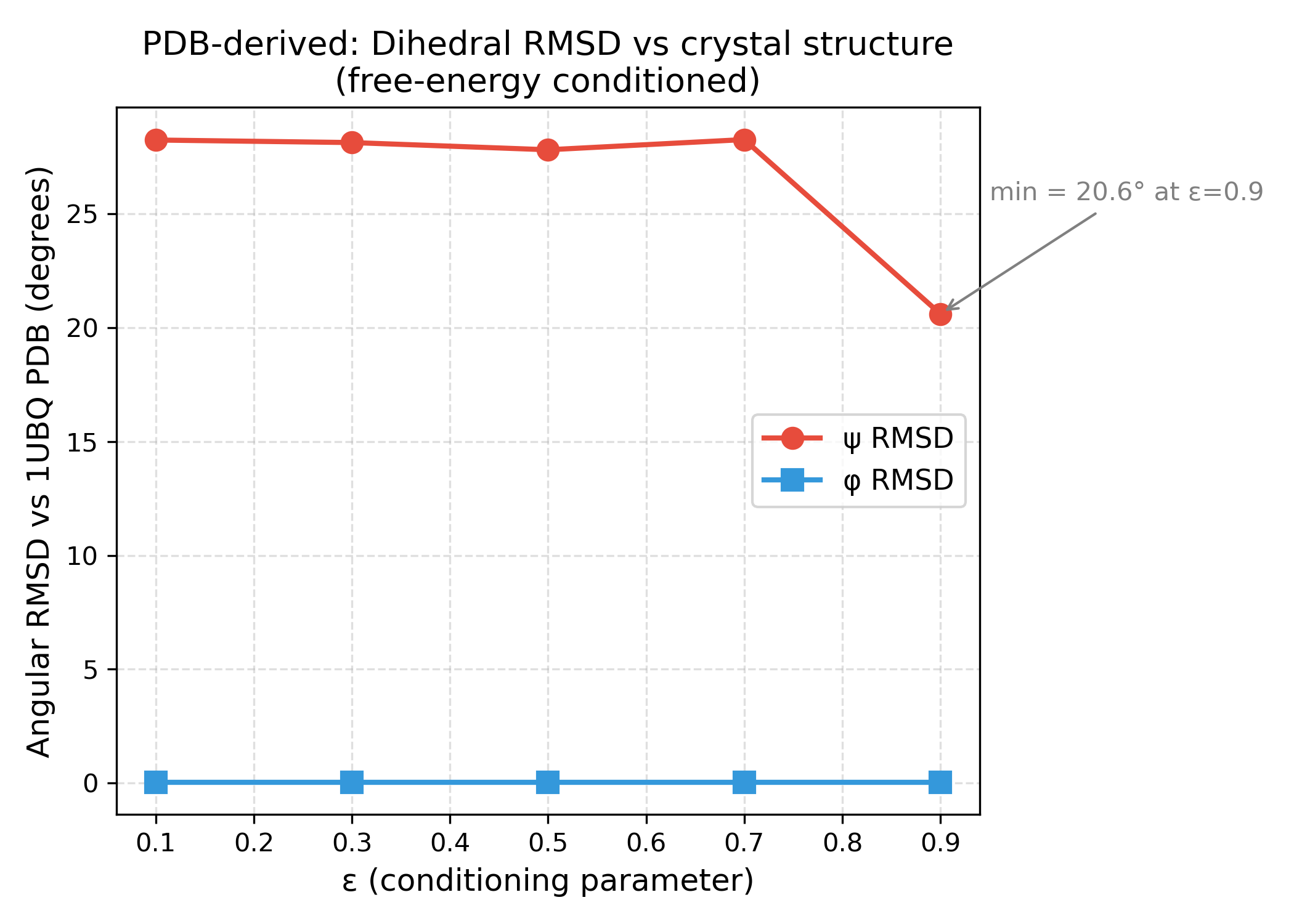}
\caption{Angular RMSD (degrees) between SDP-recovered dihedral angles and the 1UBQ crystal structure, as a function of $\varepsilon$. The $\psi$ RMSD is approximately $28^\circ$ for $\varepsilon \le 0.7$ and decreases to $20.6^\circ$ at $\varepsilon = 0.9$. The $\phi$ RMSD is identically zero.}
\label{fig:rmsd-vs-eps}
\end{figure}

\subsection{Supplementary Blending Weight Sensitivity}\label{sec:alpha-sensitivity}

The blending weight $\alpha$ controls the magnitude of perturbation at sign-flipped bonds. To verify robustness, we repeated the $\varepsilon$-sweep at $\alpha = 0.7$ (effective coupling $0.4\,J$) and $\alpha = 0.8$ (effective coupling $0.6\,J$).

At the coarse level (helix vs.\ $\beta$-sheet vs.\ coil), the three $\alpha$ values produce nearly identical results: pairwise agreement exceeds 95\% at every $\varepsilon$, and $\beta$-sheet remains invariant at 42 residues.

At the fine-grained level, reducing $\alpha$ causes a systematic $\alpha_R \to \pi$-helix reclassification affecting 12--16 residues at $\alpha = 0.7$ (versus 0 $\pi$-helix residues at $\alpha = 0.9$). The affected positions are concentrated in the main $\alpha$-helix (residues 23--30) and at scattered helical sites. This shift is expected: $\alpha_R$ and $\pi$-helix share the same $\phi$ range and differ only in $\psi$ ($[-60^\circ, -30^\circ]$ vs.\ $[-80^\circ, -60^\circ]$), so a modest $\psi$ shift at lower $\alpha$ suffices to cross the boundary. No residues cross the helix/$\beta$/coil boundary at any $\alpha$ value.

\begin{figure}[h!]
\centering
\includegraphics[width=\textwidth]{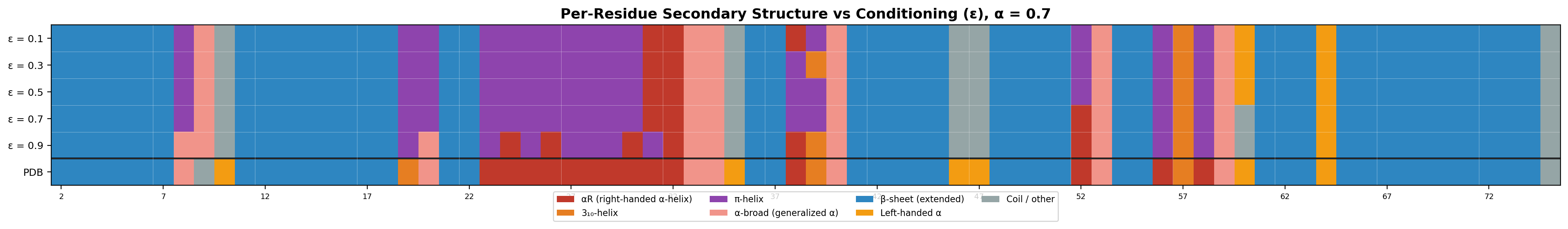}
\caption{Per-residue secondary structure vs.\ $\varepsilon$ at $\alpha = 0.7$. Compare with Fig.~\ref{fig:ss-strip} ($\alpha = 0.9$).}
\label{fig:alpha-sensitivity-07}
\end{figure}

\begin{figure}[h!]
\centering
\includegraphics[width=\textwidth]{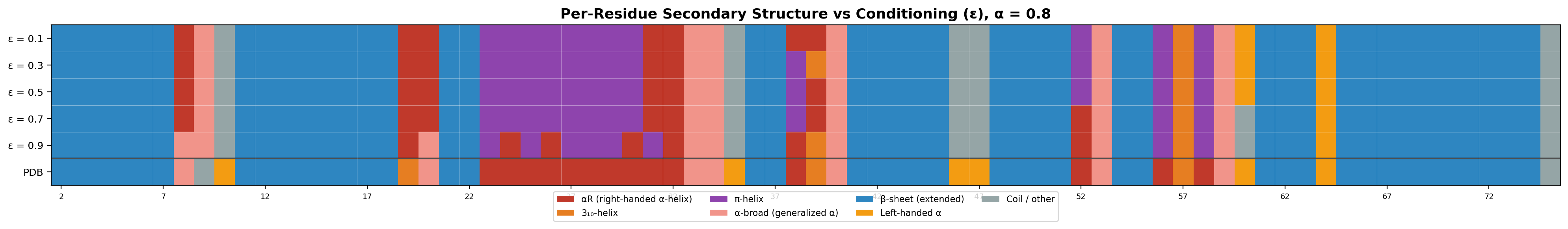}
\caption{Per-residue secondary structure vs.\ $\varepsilon$ at $\alpha = 0.8$. Compare with Fig.~\ref{fig:ss-strip} ($\alpha = 0.9$).}
\label{fig:alpha-sensitivity-08}
\end{figure}

\end{document}